\newcommand{\frm}[2][]{\todo[color=red!60,linecolor={red!100},#1,size=\tiny]{Felipe: #2}}
\newtheorem{example}{\textbf{Example}}
\newcommand{\citet}[1]{\citeauthor{#1}~\citeyear{#1}}
\theoremstyle{plain}
\newtheorem{proposition}{Proposition}
\newtheorem{theorem}{Theorem}
\theoremstyle{definition}
\newtheoremstyle{bfnote}
{}                %
{}                %
{\slshape}        %
{}                %
{\bf}       %
{.}               %
{ }               %
{\thmname{#1}\thmnumber{ #2}\thmnote{ (#3)}}       %
\theoremstyle{bfnote}
\newtheorem{definition}{Definition}
\newcommand{\sasplus}{SAS\textsuperscript{+}}
\newcommand{\dhc}{\ensuremath{\Gamma^{\text{LP}}}}
\newcommand{\rg}{RG}
\newcommand{\pom}{POM}
\newcommand{\seq}{\text{SEQ}}
\newcommand{\pho}{\text{PhO}}
\newcommand{\lmc}{\text{LMC}}
\newcommand{\flow}{\text{FLOW}}
\newcommand{\dr}{\text{DEL}}
\newcommand{\hseq}{\ensuremath{h^{\seq}}}
\newcommand{\hlmc}{\ensuremath{h^{\lmc}}}
\newcommand{\hpho}{\ensuremath{h^{\pho}}}
\newcommand{\hflow}{\ensuremath{h^{\flow}}}
\newcommand{\hdr}{\ensuremath{h^{\dr}}}
\newcommand{\hoptimal}{\ensuremath{\h^{*}}}
\newcommand{\hip}{\h^{\textup{IP}}}
\newcommand{\hooptg}{\ensuremath{h^{*}_{\observations,\goalstate}}}
\newcommand{\holmc}{\ensuremath{h^{\lmc}_{\observations}}}
\newcommand{\holmcsg}{\ensuremath{h^{\lmc_\observations}_{\observations,\goalstate}}}
\providecommand{\tuple}[1]{\ensuremath\langle#1\rangle}
\providecommand{\exec}[1]{\ensuremath{\llbracket{#1}\rrbracket}}
\providecommand\Z{\ensuremath{\mathbb{Z}}}
\providecommand\Real{\ensuremath{\mathbb{R}}}
\newcommand{\planningtask}{\ensuremath{\Pi}}
\newcommand{\variables}{\ensuremath{\mathcal{V}}}
\newcommand{\operators}{\ensuremath{\mathcal{O}}}
\newcommand{\states}{\ensuremath{S}}
\newcommand{\goalstates}{\ensuremath{\states_{*}}}
\newcommand{\facts}{\ensuremath{\mathcal{A}}}
\newcommand{\vertices}{\mathcal{S}}
\newcommand{\edges}{\mathcal{T}}
\providecommand{\cost}{\mathit{cost}}
\providecommand{\pre}{\mathit{pre}}
\providecommand{\post}{\mathit{post}}
\providecommand{\vars}{\mathit{vars}}
\providecommand{\occur}{\mathit{occur}}
\providecommand{\dom}[1]{\ensuremath{\emph{dom}({#1})}}
\newcommand{\transgraph}{\ensuremath{TS}}
\newcommand{\grplanningtask}{\ensuremath{\planningtask_\textup{P}}}
\newcommand{\grtask}{\ensuremath{\planningtask_{\goalconditions}^{\observations}}}
\newcommand{\grsolution}{\ensuremath{\Gamma^{*}}}
\newcommand{\observations}{\ensuremath{\Omega}}
\newcommand{\unreliability}{\ensuremath{\epsilon}}%
\providecommand{\h}{\ensuremath{h}}
\providecommand{\ho}{\ensuremath{h_{\observations}}}
\providecommand{\variable}{\ensuremath{V}} %
\providecommand{\obs}[1]{\ensuremath{\vec{#1}}}
\providecommand{\plan}{\ensuremath{\pi}}
\providecommand{\initialstate}{\ensuremath{s_{0}}}
\providecommand{\goalstate}{\ensuremath{s^{*}}}
\providecommand{\rgoal}{\ensuremath{\goalstate_\mathcal{R}}}
\providecommand{\goalconditions}{\ensuremath{\Gamma}}
\newcommand{\optgrplanningtask}{}
\newcommand{\hPi}[1]{\ensuremath{\h_{\optgrplanningtask #1}}}
\newcommand{\hstarPi}[1]{\ensuremath{\hoptimal_{\optgrplanningtask #1}}}
\newcommand{\hoPi}[2][\observations]{\ensuremath{\h_{\optgrplanningtask #2, #1}}}
\newcommand{\hostarPi}[2][\observations]{\ensuremath{\hoptimal_{\optgrplanningtask #2, #1}}}
\newcommand{\hoipPi}[2][\observations]{\ensuremath{\hip_{\optgrplanningtask #2, #1}}}
\providecommand\Y[1]{\ensuremath{\mathsf{Y}_{#1}}}
\providecommand\Yobs[1]{\ensuremath{\obs{\mathsf{Y}}_{#1}}}
\providecommand\varU[1]{\ensuremath{\mathsf{U}_{#1}}}
\providecommand\varR[1]{\ensuremath{\mathsf{R}_{#1}}}
\providecommand\varA[1]{\ensuremath{\mathsf{A}_{#1}}}
\providecommand\varT[1]{\ensuremath{\mathsf{T}_{#1}}}
\newcommand\lpvariables{\ensuremath{\mathcal{Y}}}
\newcommand\constraints{\ensuremath{C}}
\newcommand\constraintso{\ensuremath{C_{\observations}}}
\newcommand\landmark{\ensuremath{L}}
\begin{document}

\title{Goal Recognition via Linear Programming}

\author{\name Felipe Meneguzzi \email felipe.meneguzzi@abdn.ac.uk \\
        \addr University of Aberdeen, Scotland, UK\\
        \addr Pontifical Catholic University of Rio Grande do Sul, Brazil
       \AND
       \name Ramon Fraga Pereira \email ramonfraga.pereira@manchester.ac.uk \\
       \addr University of Manchester, England, UK
	   \AND
       \name Luísa R. de A. Santos\email lrasantos@inf.ufrgs.br \\
       \name André G. Pereira \email agpereira@inf.ufrgs.br \\
       \addr Federal University of Rio Grande do Sul, Brazil}

\maketitle

\begin{abstract}
\textit{Goal Recognition} is the task by which an observer aims to discern the goals that correspond to plans that comply with the perceived behavior of subject agents given as a sequence of observations. 
Research on \textit{Goal Recognition as Planning} encompasses reasoning about the model of a planning task, the observations, and the goals using planning techniques, resulting in very efficient recognition approaches.
In this article, we design novel recognition approaches that rely on the \textit{Operator-Counting} framework, proposing new constraints, and analyze their constraints' properties both theoretically and empirically. 
The \textit{Operator-Counting} framework is a technique that efficiently computes heuristic estimates of \textit{cost-to-goal} using \textit{Integer/Linear Programming} (IP/LP). 
In the realm of theory, we prove that the new constraints provide lower bounds on the cost of plans that comply with observations. 
We also provide an extensive empirical evaluation to assess how the new constraints improve the quality of the solution, and we found that they are especially informed in deciding which goals are unlikely to be part of the solution.
Our novel recognition approaches have two pivotal advantages: first, they employ new IP/LP constraints for efficiently recognizing goals; second, we show how the new IP/LP constraints can improve the recognition of goals under both partial and noisy observability. 

\end{abstract}

\section{Introduction}

\textit{Goal Recognition} is the task by which an observer perceives an agent's behavior by an abstract \textit{sensor} as observations, and \textit{infers} a subset of goals that have plans that comply with the agent behavior~\cite{AIJ_SchmidtSG78} subject to domain model dynamics. 
In the most common setting for goal recognition, the observations are sub-sequences of operators, possibly partial and noisy, extracted from the original plan that the agent executed, which is unknown from the observer's perspective. 
In \textit{Goal Recognition as Planning}~\cite{ramirez2009plan}, such observations constitute constraints on which plans from the domain models explain valid goal hypotheses. 
Thus, the observer can apply \emph{planning} algorithms to reason about a \emph{goal recognition} task and select a subset of the possible goals.
An observer can then reason about a goal recognition task by \textit{planning} from the point of view of the observed agent and selecting a subset of the possible goals. 

Most approaches to \textit{Goal Recognition as Planning} often assume that the agent is rational, i.e., the most likely solution is the subset of goals that have plans complying with the observations with the least additional cost~\cite{ramirez2009plan,ramirez2010probabilistic,martin2015fast,sohrabi2016plan,masters2019JAIR}.
These approaches can be understood as \textit{cost-based} approaches, as they rely on comparing the costs of plans that either comply with or do not comply with the observations for the same goal. 
They select some subset of the goals by comparing these costs for all possible goals, making the efficiency of cost computation critical for such approaches. 

\textit{Cost-based} approaches often employ a search procedure of planning algorithms for such cost computation, making them accurately but computationally expensive~\cite{ramirez2009plan,ramirez2010probabilistic,sohrabi2016plan,vered2016_OnlineMirroring,masters2019JAIR}.
By contrast, other cost-based recognition approaches have relied on information from planning heuristics to speed up recognition without running a search procedure~\cite{martin2015fast,pereira2020landmarks,Santos2021}. 
The trade-off here is that the recognition process is limited to how informed the underlying heuristics are. 
The \textit{Operator-Counting} framework~\cite{pommerening2014lp} is a \textit{planning} technique that models a planning task as an \textit{Integer Program} (IP) model, typically relaxed into a \textit{Linear Program} (LP) model, in order to efficiently estimate to cost of solving it.
This framework combines the information of other planning techniques through IP/LP constraints, obtaining a more informed estimate of the cost of solving a \textit{planning task}.
Common sources of constraints are \textit{landmarks} \hlmc~\cite{hoffmann2004ordered}, the \textit{post-hoc} optimization \hpho~\cite{florian2013posthoc}, \textit{network flow} \hflow~\cite{bonet2014flow} and \textit{delete-relaxation} \hdr~\cite{Imai:2014bk}. 
Performing goal recognition using \textit{Operator-Counting} constraints offers two key advantages. 
First, the main advantage of using the operator-counting framework is that it enables one to manipulate the constraints directly to reason about the cost of solving planning tasks. 
Thus, like their planning heuristics counterparts, one can improve the informativeness of the goal recognition process by adding more constraints encoding information from the planning task.
Second, one can include constraints to deal specifically with the information available from goal recognition problems, including noise. 

Our previous work provides an efficient approach that uses the \textit{Operator-Counting} framework to solve \textit{goal recognition} tasks by including constraints based on the observations that restrict the set of solutions of the IP/LP model~\cite{Santos2021}. 
Within our recognition framework, we define a reference solution set to better compare the quality of solutions from earlier approaches. 
In this article, we consolidate our original work on goal recognition using the \textit{Operator-Counting} framework~\cite{Santos2021}, expanding the set of constraints and carefully studying the implication of the LP constraints to the recognition task. 
Thus, in this article, we provide a number of novel contributions to goal recognition in general, and to LP-based techniques in particular. 
First, we formally prove, in Section~\ref{sec:lp_goal_recognition}, that our original \textit{Operator-Counting} constraints provide lower bounds on the cost of plans that comply with the observations. 
Second, we introduce novel LP/IP \textit{landmark constraints}~\hlmc for goal recognition tasks that extract information \textit{on-the-fly}, from the observation sequence. 
Third, we develop further empirical experiments in Section~\ref{sec:experimental_setting} to improve the understanding of how the extra constraints impact the quality of the results. 
We show in Section~\ref{sec:evaluation} that the new constraints are more informed in deciding which goals are unlikely to be part of the solution. 
Our new experiments show that strengthening our recognition of heuristics leads the value of non-reference goals to increase more than the reference ones, thus better differentiating them.
In a broad context, our new approaches improve the quality of the solutions substantially and pave the way for exploring novel avenues of research for solving goal recognition tasks efficiently with high-quality accuracy.

\section{Background} 
In this section, we introduce the essential background on \textit{Planning} and \textit{heuristic functions}, as well as key concepts of the \textit{Operator-Counting} framework and its \textit{constraints}.

\subsection{Planning}
\label{sec:planning_task}

\textit{Planning} is the problem of finding a sequence of operators that achieves a particular goal condition from an initial state.
A {\sasplus}~\textit{planning task}~\cite{Backstrom:1993tc} is a tuple~$\planningtask = \tuple{\variables, \operators, \initialstate, \goalstate, \cost}$, where  $\variables$ is a set of discrete finite-domain \textit{variables}; $\operators$ is a set of~{\sasplus} \textit{operators}; $\initialstate$ is the \textit{initial state}; $\goalstate$ is a~{\sasplus} partial variable assignment denoting the \textit{goal condition}; and $\cost$ is a function that maps each operator in $\operators$ to a natural-valued cost. In this article, operators' costs are unitary.

An \textit{atom} is a pair $\tuple{\variable,v}$ of a variable $\variable \in \variables$, and one of its values $v \in \dom{\variable}$. 
The set of all atoms $\facts$ consists of all possible pairs of variables $\variable \in \variables$ with their respective possible values $v \in \dom{\variable}$. 
A \textit{partial state} is a set of atoms $\tuple{\variable,v}$ that mentions each variable at most once. 
Let $\vars(s)$ be the domain of variables in a partial state~$s$.
A variable assignment $s$ with $\vars(s) = \variables$ is a \textit{complete state} or simply a \textit{state}, and the set of all (complete) states over \variables~is the state-space~\states. 
A complete state~$s$ is \textit{consistent} with a (possibly partial) state $s'$ if $s' \subseteq s$.

Each \textit{operator} is a tuple $o = \tuple{p, e}$ of partial variable assignments, where $p = \pre(o)$ is the set of preconditions, and $e = \post(o)$ is the set of effects.
An operator $o$ is applicable in a state~$s$ if $\pre(o) \subseteq s$.
The state resulting from executing an applicable operator~$o$, denoted $s'= s\exec{o}$, is for all $\variable \in \variables$, $s'[V] =\post(o)[V]$ if $\variable\in\vars(\post(o))$ and $s'[V] =s[V]$ otherwise. 
Finally, an \textit{s-plan} \plan~is a sequence of operators $\tuple{o_1, o_2, \ldots, o_n}$ that starts at state $s$ and ends in a state that satisfies the condition \goalstate~after sequential applications of each operator $o_i \in \plan$. An \textit{s-plan} \plan~is \textit{optimal} if the value of $\cost(\plan)$ is minimal over all \textit{s-plan}, where $\cost(\plan) = \sum^{n}_{i=1} \cost(o_i)$. 
An \textit{s-plan} is called a \textit{plan} if $s = \initialstate$ which is a \textit{solution} for a planning task.

A \textit{heuristic function} for a given planning task $\planningtask$ with a set of states~$S$ is a function $\h : S \rightarrow \Real\cup\{\infty\}$ that estimates the cost of the solutions for a given state.
A given heuristic \h~is \textit{admissible} if and only if, for every state $s\in S$, $\h(s) \leq \hoptimal(s)$, where $\hoptimal(s)$ is the \textit{perfect heuristic}, i.e. the cost of an optimal solution for~$s$.

\subsection{Operator-Counting Framework}
\label{sec:oc_framework}

The \textit{Operator-Counting} framework combines the information from different sources through an \textit{Integer/Linear Program} ($\textup{IP/LP}$)~\cite{pommerening2014lp}. These sources provide constraints for a state~$s$ that must be satisfied by all $s$-plans. Common sources of constraints are state equation \hseq~\cite{bonet2013admissible}, landmarks \hlmc~\cite{hoffmann2004ordered}, the post-hoc optimization \hpho~\cite{florian2013posthoc}, the network flow \hflow~\cite{bonet2014flow}, and the delete relaxation \hdr~\cite{Imai:2014bk}. The framework has two main advantages: it allows us to efficiently combine the information a diverse set of sources maintaining admissibility, and it enables us to reason and to manipulate the information of the source directly.  
In general, the objective value of the linear program~($\textup{LP}$), a linear relaxation of the integer program, is used as heuristic function to guide the search.

This heuristic works by computing a \textit{pseudo-plan}. 
In the context of this article, a pseudo-plan is an unordered multiset of operators derived from the  \textit{operator-counts}. 
When ordered, a pseudo-plan may or may not yield a valid plan. 
Each operator must be executed at least the number of times indicated by its operator count, but in no particular order. 
\textit{Operator-counting variables} represent these operator counts in the IP/LP under \textit{operator-counting constraints} from Definition \ref{def:oc_constraint}.

\begin{definition}[Operator-Counting Constraint] Let $\planningtask$ be a planning task with operators $\operators$, and let $s$ be one of its states. 
Let $\mathcal{Y}$ be a set of real-valued and integer variables, including an operator-counting non-negative integer variable $\Y{o}$ for each operator $o \in \operators$.
A set of linear inequalities over $\lpvariables$ is an \textit{operator-counting constraint} for $s$ if for every valid $s$-plan~\plan, there exists a solution for it with $\Y{o} = \occur_{\plan}(o)$ for all $o\in\operators$ --- where $\occur_{\plan}(o)$ is the number of occurrences of operator~$o$ in the $s$-plan~\plan. 
We call $\constraints$ a set of operator counting constraints. 
\label{def:oc_constraint}
\end{definition}

Operator counting variables then allow one to define heuristics computed indirectly via an integer/linear program that enforces constraints computed from the structure of a planning task. 
We formalize the broad class of such heuristics in Definition~\ref{def:oc_heuristic}.

\begin{definition}[Operator-Counting $\textup{IP}$/$\textup{LP}$ Heuristic]
    \label{def:oc_heuristic}
The \textit{operator-counting integer program} $\textup{IP}^C$ for a set of operator-counting constraints $\constraints$ for state $s$ is
\begin{align*}
    & \text{minimize} \sum_{o\in O} \cost(o)\Y{o}\\
    & \text{subject to $C$},\\
    & \Y{o}\in\Z^{+}_0.
\end{align*}
The $\textup{IP}$ \textit{heuristic} $h^{\textup{IP}}$ is the objective value of $\textup{IP}^C$, and the $\textup{LP}$ \textit{heuristic} $h^{\textup{LP}}$ is the objective value of its linear relaxation. If the $\textup{IP}$ or $\textup{LP}$ is infeasible, the heuristic estimate is $\infty$.
\end{definition} 

An important result from \citeauthor{pommerening2014lp}~\citeyear{pommerening2014lp}, is that, as we add more constraints into the operator-counting framework, we exclude solutions that are inconsistent with solutions to the underlying planning problem. 
The practical result is that operator-counting heuristics can only get stronger as we add further constraints to them, a property which we replicate in Proposition~\ref{prop:dominance}. 

\begin{proposition}[Dominance]
    \label{prop:dominance}
Let $C$, $C'$ be functions that map states~$s$ of a planning task $\planningtask$ to constraint sets for $s$ such that $C(s) \subset C'(s)$ for each $s$. Then the heuristic~$h^{\textup{IP/LP}}$ for $C'$ \textit{dominates} the respective heuristic for $C$: $h^{IP/LP}_{C} \leq h^{IP/LP}_{C'}$.
\end{proposition}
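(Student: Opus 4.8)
The plan is to exploit the elementary monotonicity of optimization under constraint tightening: in a minimization problem, enlarging the constraint set can only shrink the feasible region, and minimizing a fixed objective over a smaller set cannot decrease the optimal value. No special structure of the operator-counting constraints is needed beyond the common shape of the two programs.

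First I would fix an arbitrary state $s$ and record that, by Definition~\ref{def:oc_heuristic}, the program associated with $C(s)$ and the program associated with $C'(s)$ share exactly the same objective $\sum_{o\in\operators}\cost(o)\Y{o}$, the same variable set $\lpvariables$ (in particular the same non-negative integer operator-counting variables $\Y{o}$), and the same domain restrictions; they differ only in the linear inequalities imposed. Second, since $C(s)\subset C'(s)$, every assignment to $\lpvariables$ that satisfies all inequalities of $C'(s)$ also satisfies all inequalities of $C(s)$. Consequently the feasible region $F'$ of the $C'$-program is contained in the feasible region $F$ of the $C$-program, i.e. $F'\subseteq F$. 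Third, I would invoke the basic fact that the minimum of a fixed function over a subset is at least its minimum over the superset, so that the optimal value of the $C$-program is at most that of the $C'$-program; by Definition~\ref{def:oc_heuristic} these optimal values are precisely $h^{IP/LP}_{C}(s)$ and $h^{IP/LP}_{C'}(s)$, giving $h^{IP/LP}_{C}(s)\leq h^{IP/LP}_{C'}(s)$.

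The only point requiring real care --- and thus the main obstacle, such as it is --- is the infeasibility convention, under which an empty feasible region yields the estimate $\infty$. I would dispatch it by cases: if $F'=\emptyset$ then $h^{IP/LP}_{C'}(s)=\infty$ and the inequality is immediate; if $F=\emptyset$ then $F'\subseteq F$ forces $F'=\emptyset$ as well, so both heuristics equal $\infty$; and in the remaining case both optima are finite reals and the set-inclusion argument applies verbatim. Finally I would remark that nothing in the argument used the integrality of the $\Y{o}$ beyond the shared domain restrictions, so the identical reasoning covers both the integer program and its linear relaxation, establishing the claim simultaneously for $h^{\textup{IP}}$ and $h^{\textup{LP}}$ and hence for $h^{\textup{IP/LP}}$.
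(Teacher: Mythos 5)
Your proof is correct and is essentially the canonical argument: the paper states Proposition~\ref{prop:dominance} without proof (replicating it from Pommerening et al., 2014), and the underlying justification is exactly the feasible-region monotonicity you give --- every point satisfying the larger constraint set $C'(s)$ satisfies the smaller one $C(s)$, minimization over a subset cannot decrease the optimum, and the $\infty$ convention for infeasibility is handled by your case split, identically for the IP and its LP relaxation. The one small imprecision is the claim that both programs share ``the same variable set'': constraints added in $C'(s)$ may introduce auxiliary variables beyond the $\Y{o}$, so strictly one should project a feasible solution of the $C'$-program onto the variables appearing in the $C$-program (the objective depends only on the shared $\Y{o}$, so its value is unchanged), after which your argument goes through verbatim.
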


The key sources of operator-counting constraints investigated in this article stem from the various types of \textit{landmarks}~\cite{hoffmann2004ordered}. 
We follow the formalization according to \citet{pommerening2014lp} and cite the original formalization when appropriate. 
A \textit{disjunctive action landmark}~$L$ for a state~$s$ is a set of operators, where at least one operator in~$L$ must be part of every $s$-plan for state~$s$~\cite{hoffmann2004ordered}. 
Definition~\ref{def:lm_constraints} encodes the landmark information as an operator-counting constraint. 
Since at least one operator in a disjunctive action landmark must be used, the sum of their respective operator-counting variables should be at least one.

\begin{definition}[Landmark Constraint]
Let $L\in\operators$ be a disjunctive action landmark for a state~$s$ of a planning task~$\Pi$. The \textit{landmark constraint} for~$L$ is,
\begin{align*}
     \sum_{o\in L} \Y{o} \geq 1.
\end{align*}
\label{def:lm_constraints}
\end{definition}

Most of the efficient algorithms to compute landmark constraints are, in general, correct and incomplete~\cite{hoffmann2004ordered}. Thus, these algorithms only produce and extract a subset of the landmarks of a planning task $\planningtask$. 

\section{Goal Recognition as Planning}
\label{sec:goal_recognition}

In this section, we introduce the task of \textit{Goal Recognition}, following the formalism of \textit{Goal Recognition as Planning}~\cite{ramirez2009plan}. 
From the definition of a \textit{goal recognition task}, we introduce a specific definition of solution that enables a precise comparison between different approaches that solve the task.

Definition~\ref{def:gr} formally defines the task of goal recognition using a modified version of a planning task $\planningtask$ (Section~\ref{sec:planning_task}). 

\begin{definition}[Goal Recognition Task]
\label{def:gr}
    Let $\grplanningtask$ be a planning task without a goal condition, $\goalconditions$ be a set of goal conditions (or hypotheses), and $\observations=\tuple{\obs{o}_1,\ldots,\obs{o}_{m}}$ be a sequence of observations. 
    The sequence of observations~$\observations$ is extracted from a plan~$\plan=\tuple{o_1,\ldots,o_{n}}$ of the planning task \grplanningtask{} with the goal condition~$\rgoal\in\goalconditions$.
    Then, a \textit{goal recognition} task is a tuple $\grtask = \tuple{\grplanningtask, \goalconditions, \observations}$.
\end{definition}
This bakes into the definition the assumption that the set of goal condition hypotheses $\goalconditions$ does indeed contain a goal $\rgoal\in\goalconditions$ that explains $\observations$. 
Thus, by construction there must be a plan for \grplanningtask{} with the goal condition \rgoal{}. 
As we further refine the definition of the elements of goal recognition tasks, we introduce terminology to refer to these components. 
First, we refer to the goal condition \rgoal{} as the \textit{intended reference goal}. 
Second, we represent the label extracted from an operator~$o$ as observation~$\obs{o}$.
The label assigned to observation~$\obs{o}$ is the same label assigned to operator~$o$. 
Thus, we access observations and operators using the labels interchangeably.
Finally, as we refine observations in Definitions~\ref{def:comply}-\ref{def:observations_noisy}, we introduce the notion of noise in the observations. 

Wherever we introduce noise in the observations, we can extend the definition of the goal recognition task with an assumption of the maximal level of noise in the observations. 
In these cases, we extend Definition~\ref{def:gr} with the level of noise or unreliability $\unreliability$ of the observations. 
The problem under noise then becomes a quadruple $\tuple{\grplanningtask, \goalconditions, \observations, \unreliability}$ so that Definition~\ref{def:gr} generalizes to $\tuple{\grplanningtask, \goalconditions, \observations} = \tuple{\grplanningtask, \goalconditions, \observations, \unreliability = 0}$.

\begin{definition}[Observation Compliance]
\label{def:comply}
Let~$\tuple{\grplanningtask, \goalconditions, \observations}$ be a plan recognition task, and let $\plan=\tuple{o_1, \ldots, o_n}$ be a plan for a planning task \grplanningtask{} with the goal condition~$\goalstate\in\goalconditions$ and a sequence of observations $\observations=\tuple{\obs{o}_1, \ldots, \obs{o}_m}$.
Plan $\plan$ \textit{complies} with $\observations$ if there is a monotonic function $f : [1,m] \mapsto [1,n]$ that maps all labels of operator indexes in $\observations$ to indexes in $\plan$, such that $ \obs{o}_i = o_{f(i)}$, i.e., the labels match.
\end{definition}

\textit{Observations} are the key evidence that one can use to infer the intended reference goal in goal recognition tasks. 
The trivial problem setting for an algorithm performing goal recognition involves access to the full sequence of observations of a plan. 
However, in most realistic settings, observations suffer from a number of flaws, stemming from both the erratic behavior of the agent under observation and the sensing capabilities of the recognizer. 
Thus, we define three classes of sequences of observations: \textit{optimal} and \textit{suboptimal} (Definition~\ref{def:observations_opt_subopt}) observations, and \textit{noisy} observations (Definition~\ref{def:observations_noisy}).

\begin{definition}[Sequence of Observations] \label{def:observations_opt_subopt}
Let~$\plan=\tuple{o_1,\ldots,o_n}$ be a plan for the planning task \grplanningtask{} with with the reference goal condition \rgoal{}.
Then, a \textit{sequence of observations} \observations{} is a sequence of labels of operators extracted from the plan \plan{} maintaining their relative order. 
The sequence may be partial, containing any number of operator labels from the plan \plan.
An \textit{optimal} sequence of observations is extracted from an optimal plan and a \textit{suboptimal} sequence of observations is extracted from a suboptimal plan. An optimal/suboptimal observation is part of an optimal/suboptimal sequence of observations. 
\end{definition}

\begin{definition}[Noisy Observations]
\label{def:observations_noisy}

Let~$\observations$ be a sequence of observations extracted from $\plan$. 
Then, $\observations'$ %
is a \textit{noisy} sequence of observations if it is equal to $\observations$ with the addition of at least one observation in any place of the original sequence of an operator from~$\operators - \plan$.

\end{definition}

We extend the standard definition from~\citet{ramirez2009plan} of an \textit{exact solution set} for a goal recognition task to also consider suboptimal observation sequences (Definition~\ref{def:gr_solutionset}) and call it \textit{reference solution set}. 
We define the \textit{reference solution set} as a subset of the goal conditions such that there exists a complying plan as suboptimal as or less than the plan that generated the observations for the reference goal. 

\begin{definition}[Reference Solution Set]
\label{def:gr_solutionset}
Let~$\langle \grplanningtask, \goalconditions, \observations\rangle$ be a \textit{goal recognition task} and  $\rgoal\in\goalconditions$ the reference goal.
Let \plan{} be the plan for \grplanningtask{} with the reference goal \rgoal{}, from which \observations{} is extracted. 
Let \hostarPi{\goalstate}{} be a heuristic that returns the cost of an optimal plan from a state for \grplanningtask{} with goal condition \goalstate{} restricted to the set of plans that comply with \observations. 
Let \hstarPi{\goalstate} be a heuristic that returns the cost of an optimal plan for \grplanningtask{} with goal condition~$\goalstate$. 
Heuristics \hostarPi{\goalstate} and \hstarPi{\goalstate} are equal to $\infty$ if no plan exists.
Then, the \textit{reference solution set} for the \textit{goal recognition task} is
	$$\grsolution = \{\goalstate \in \goalconditions \mid   
	   \frac{\hostarPi{\goalstate}(\initialstate)}{\hstarPi{\goalstate}(\initialstate)} \leq \frac{\cost(\plan)}{\hstarPi{\rgoal}(\initialstate)} \land \hostarPi{\goalstate}(\initialstate)\neq\infty \}$$
\end{definition}

\begin{figure}[ht]
    \centering
    \includegraphics[scale=0.7]{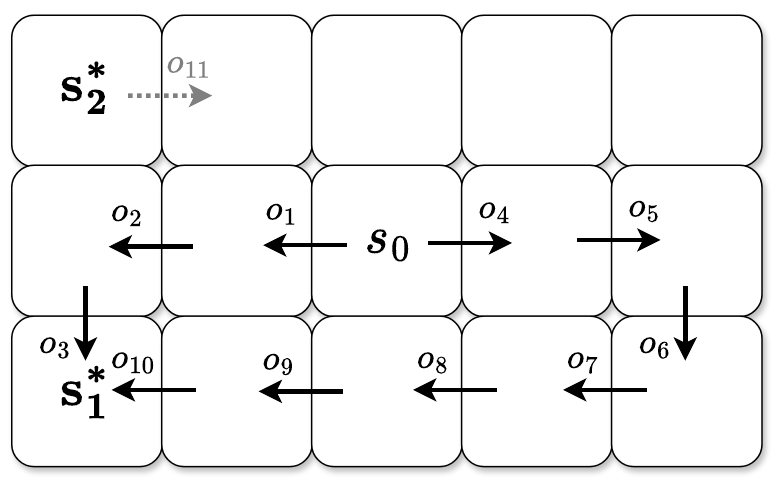}
    \caption{A goal recognition task example.}
    \label{fig:ex1}
\end{figure}

\begin{example}
Figure~\ref{fig:ex1} illustrates a goal recognition task~$\langle \grplanningtask, \{\goalstate_{1}, \goalstate_{2}\}, \observations\rangle$.  %
Let~$\goalstate_1$ be the reference goal \rgoal{}. 
Now consider different sequences of observations. $\observations_1 = \tuple{ \obs{o}_1}$ is an optimal sequence of observations because it is extracted from the optimal plan~$\plan_1=\tuple{o_1,o_2,o_3}$, $\observations_2 = \tuple{ \obs{o}_5,  \obs{o}_7,  \obs{o}_9}$ and $\observations_3 = \tuple{ \obs{o}_4,  \ldots,  \obs{o}_{10}}$ are suboptimal sequences of observations because they are extracted from the suboptimal plan~$\pi_2=\tuple{o_4,\ldots,o_{10}}$, and $\observations_4 = \tuple{ \obs{o}_4,\ldots,\obs{o}_{10}, \obs{o}_{11}}$ is a suboptimal and noisy sequence of observations because it was extracted from~$\plan_2$ and the observation of~$\obs{o}_{11}$ was added. 
We compute the reference solution set for goal recognition tasks with noisy observations by ignoring noisy observations in the sequence of observations. 
The reference solution set for any of these observation sequences is $\Gamma^*=\{\goalstate_1\}$. 
For example, $\hostarPi[\observations_4]{\goalstate_1}/\hstarPi{\goalstate_1}=7/3$, $\hostarPi[\observations_4]{\goalstate_2}/\hstarPi{\goalstate_2}=9/3$, $\cost(\plan_2)/\hstarPi{\rgoal}=7/3$
and thus $\Gamma^*=\{\goalstate_1\}$.
\end{example}

\paragraph{Cost Difference-Based Goal Recognition}
\citeauthor{ramirez2010probabilistic}~\citeyear{ramirez2009plan,ramirez2010probabilistic} address goal recognition tasks where agents can pursue their goal suboptimally\footnote{\citeauthor{ramirez2010probabilistic}~\citeyear{ramirez2010probabilistic} present a different formulation, but here we present the formulation from \citeauthor{masters2019JAIR}~\citeyear{masters2019JAIR}, which is effectively the same.}.
For this, they consider the cost difference between an optimal plan that complies with the observations and the cost of an optimal plan that does not comply with at least one observation for each goal condition. 
Similarly, \citeauthor{Vered_2017gj}~\citeyear{Vered_2017gj} empirically show that the cost ration between an optimal plan and the cost of an optimal plan that complies with the observations for each goal condition can rank goal hypotheses similarly to \citeauthor{ramirez2010probabilistic}~\citeyear{ramirez2009plan,ramirez2010probabilistic}. 
The subset of goal conditions with minimum such cost differential is part of their solution. 
The definition below formalizes this idea:

\begin{definition}[Cost Difference Solution Set] 
\label{def:gr_solutionset_cost_diff}
Let~$\langle \grplanningtask, \goalconditions, \observations\rangle$ be a \textit{goal recognition task} and  $\rgoal\in\goalconditions$ the reference goal.
Let \hostarPi{\goalstate}{} be a heuristic that returns the cost of an optimal plan from a state for \grplanningtask{} with goal condition \goalstate{} restricted to the set of plans that comply with \observations.
Let \hstarPi{\goalstate} be a heuristic that returns the cost of an optimal plan for \grplanningtask{} with goal condition~$\goalstate$. 
The minimal difference between observation-complying plans and optimal plans $\delta_{\text{min}}$ for hypotheses $\goalconditions$ is:
\begin{equation}
\delta_{\text{min}} = \min_{\goalstate_i\in\goalconditions~:~\hostarPi{\goalstate_i}(\initialstate) < \infty} \{\hostarPi{\goalstate_i}(\initialstate) - \hstarPi{\goalstate}(\initialstate)\}
\label{equation-cost-delta}
\end{equation}
Then, the \emph{Cost Difference Solution Set} $\Gamma^{\text{Cost-Diff}}$ of goals hypotheses that are consistent with minimal cost difference $\delta_{\text{min}}$ is:
\begin{equation}
\Gamma^{\text{Cost-Diff}} = \{\goalstate_i \in \goalconditions \mid \hostarPi{\goalstate_i}(\initialstate) - \hstarPi{\goalstate_i}(\initialstate) = \delta_{\text{min}}\}
\label{equation-cost-solution-set}
\end{equation}
\end{definition}

\begin{example}
Figure~\ref{fig:ex2} illustrates a goal recognition task~$\langle \planningtask_{\textup{P}}, \{\goalstate_{1}, \goalstate_{2}\}, \{\obs{o}_{1}\}\rangle$. 
Let~$\goalstate_1$ be the reference goal \rgoal{}, and $\obs{o}_{1}$ be the only observation of the task. 
For this example, the cost of  optimal complying plans are~$\hostarPi{\goalstate_1}(\initialstate)=4$ and $\hostarPi{\goalstate_2}(\initialstate)=3$. 
Thus, using only this information for observations extracted from suboptimal behavior one would return solution~$\Gamma^*=\{\goalstate_2\}$, which intuitively is not the solution. 
However, by using Definition~\ref{def:gr_solutionset_cost_diff} to compute the solution we have $\hostarPi{\goalstate_1}(\initialstate) - \hstarPi{\goalstate_1}(\initialstate)=0$ and  
$\hostarPi{\goalstate_2}(\initialstate) - \hstarPi{\goalstate_2}(\initialstate)=2$, which is the intuitive solution.
\end{example}

\begin{figure}[ht]
    \centering
    \includegraphics[scale=0.7]{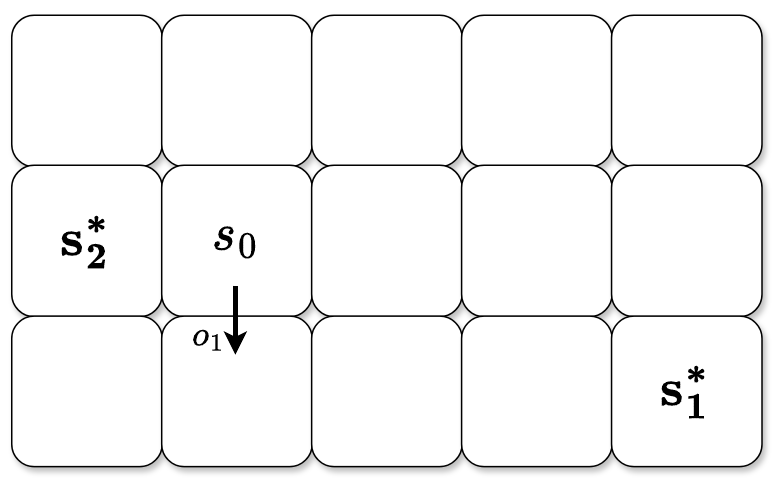}
    \caption{A goal recognition task example considering cost differences.}
    \label{fig:ex2}
\end{figure}

The main limitation of this approach is that computing the cost of $\hostarPi{\goalstate}(\initialstate)$ and $\hstarPi{\goalstate}(\initialstate)$ for \sasplus{} planning tasks is a PSPACE-complete problem, and very costly in practice, as we show in Proposition~\ref{prop:gr_pspace}. 
Thus, most approaches rely on heuristics to estimate these costs. 
In this article, we introduce and study a set of heuristics to estimate these values based on \textit{Integer/Linear} programs.

\begin{proposition}[Computing $\Gamma^{\text{Cost-Diff}}$ is PSPACE-complete]%
    \label{prop:gr_pspace}
    Let~$\grtask = \langle \grplanningtask, \goalconditions, \observations\rangle$ be a \textit{goal recognition task},
    computing the reference-solution set $\Gamma^{\text{Cost-Diff}}$ for~$\grtask$ is PSPACE-complete. 
\end{proposition}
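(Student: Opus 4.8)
The plan is to establish PSPACE-completeness by the standard two-sided argument---membership in PSPACE together with PSPACE-hardness---after first fixing the decision problem we mean by ``computing $\Gamma^{\text{Cost-Diff}}$''. Since $\goalconditions$ is part of the input and finite, producing the set reduces (in polynomial space) to answering, for each $\goalstate_i\in\goalconditions$, the query ``is $\goalstate_i\in\Gamma^{\text{Cost-Diff}}$?''; I would therefore prove both bounds for this membership query and note that iterating over the $|\goalconditions|$ goals, reusing space, preserves the bound.

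For membership, I would reduce the computation to a polynomial number of optimal-cost computations, each carried out in polynomial space. The value $\hstarPi{\goalstate_i}(\initialstate)$ is simply the optimal plan cost for \grplanningtask{} with goal condition $\goalstate_i$, and computing optimal \sasplus{} plan cost is well known to lie in PSPACE: one can guess-and-verify a plan against the at-most-exponential state space using a Savitch-style reachability counter that fits in polynomial space. For $\hostarPi{\goalstate_i}(\initialstate)$ the only extra ingredient is the compliance requirement of Definition~\ref{def:comply}, which I would encode by augmenting the planning task with a single counter variable ranging over $\{0,\dots,m\}$ recording how many observations of $\observations$ have been matched as a subsequence so far, and by conjoining ``$\text{counter}=m$'' to the goal. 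This multiplies the state space only by a factor of $m+1$, so the augmented task is polynomial in the input and its optimal-cost problem is again in PSPACE. All these costs are bounded by an exponential and hence representable in polynomially many bits, so computing $\delta_{\text{min}}$ from Equation~\eqref{equation-cost-delta} and testing the equality in Equation~\eqref{equation-cost-solution-set} requires only polynomial additional space.

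For hardness, I would reduce from plan existence for \sasplus{} tasks, which is PSPACE-complete~\cite{Backstrom:1993tc}. Given an arbitrary task $\planningtask=\tuple{\variables,\operators,\initialstate,\goalstate,\cost}$, I construct a goal recognition task $\grtask$ whose domain $\grplanningtask$ is $\planningtask$ stripped of its goal, with a single goal hypothesis $\goalconditions=\{\goalstate\}$ and the empty observation sequence $\observations=\langle\rangle$. Because every plan trivially complies with the empty sequence, we have $\hostarPi{\goalstate}(\initialstate)=\hstarPi{\goalstate}(\initialstate)$ whenever a plan exists, so $\goalstate\in\Gamma^{\text{Cost-Diff}}$ holds exactly when $\hostarPi{\goalstate}(\initialstate)<\infty$, i.e.\ exactly when a plan for $\planningtask$ exists. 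Hence any procedure deciding membership in $\Gamma^{\text{Cost-Diff}}$ decides plan existence, giving PSPACE-hardness; a harmless second always-reachable ``dummy'' goal can be added if one wants $\delta_{\text{min}}$ (and the solution set) to be well defined in both cases.

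The step I expect to be the main obstacle is the membership direction, specifically arguing rigorously that the compliance constraint does not push the problem beyond PSPACE. The subtlety is that compliance is a \emph{subsequence} condition realized by a monotonic matching function $f$, not a contiguous-factor condition, so the counter must advance nondeterministically---it may increment only when the executed operator's label equals the label of the next unmatched observation, and otherwise stays fixed. I would have to verify carefully that this augmented transition system captures Definition~\ref{def:comply} exactly (including the case where a repeated operator label can be matched to at most one observation at a time) and that the resulting optimal-cost problem still admits a polynomial-space decision procedure. By contrast, the hardness direction is essentially immediate once the empty-observation reduction is chosen.
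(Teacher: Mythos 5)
Your proposal is correct and follows the same overall skeleton as the paper's proof: PSPACE membership by reducing the compliance-constrained cost $\hostarPi{\goalstate}(\initialstate)$ to ordinary optimal planning, and hardness from the PSPACE-completeness of \sasplus{} planning~\cite{Backstrom:1993tc}. The differences are in how the two halves are discharged. For membership, the paper simply invokes the compilation of \citet{ramirez2009plan} to turn observation compliance into a regular planning problem, whereas you reconstruct that compilation explicitly via a counter variable over $\{0,\dots,m\}$; your worry about the subsequence (rather than contiguous) nature of compliance is exactly what that compilation resolves---one copies each operator whose label matches the next unmatched observation into a counter-advancing variant, keeping the original non-advancing copy, so the nondeterministic ``match or skip'' choice is realized by operator choice, and the blow-up is only $O(|\operators|\cdot m)$. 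For hardness, your version is actually tighter than the paper's: the paper argues that since computing the component $\hstarPi{\goalstate}(\initialstate)$ is PSPACE-hard, the overall computation must be too, which as stated is not a reduction---hardness of a subroutine does not by itself transfer to the problem that would use it. Your construction (a single goal hypothesis, empty observation sequence, plus an always-reachable dummy reference goal so that the task satisfies Definition~\ref{def:gr} and $\delta_{\text{min}}$ is well defined) supplies the missing reduction: membership of $\goalstate$ in $\Gamma^{\text{Cost-Diff}}$ then decides plan existence for the original task. So your route buys a logically complete hardness argument and a self-contained membership argument, at the cost of re-deriving machinery the paper gets by citation.
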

\begin{proof}
    Our proof hinges on showing the upper and lower bound complexity for computing each of the components of the set from Definition~\ref{def:gr_solutionset_cost_diff}, specifically the cost of computing $\hstarPi{\goalstate}(\initialstate)$ and $\hostarPi{\goalstate}(\initialstate)$ for each goal hypothesis in $\grtask$. 
    First, the cost of computing the cost $\hstarPi{\goalstate}(\initialstate)$ of reaching each goal hypothesis $\goalstate$ has a trivial upper and lower bound. 
    This comes from the complexity of solving the corresponding planning task, proven to be PSPACE-complete by \citet{Backstrom:1993tc}.
    Second, the cost of computing $\hostarPi{\goalstate}(\initialstate)$ is no worse than PSPACE-complete, since we can use the reduction of \citet{ramirez2009plan} for computing an observation-complying plan to a regular planning problem. 
    Since both components of the overall computation are no more expensive than PSPACE-complete, and at least one of the components is at least PSPACE-complete, we can conclude that Goal Recognition as Planning is, indeed, PSPACE-complete.
\end{proof}

\section{Linear Programming-Based Goal Recognition}
\label{sec:lp_goal_recognition}

In this section, we develop an IP/LP constraint to produce a lower bound on the cost of an optimal complying plan.
Key to this approach is the addition of \textit{observation-counting constraints} that ensure that the IP/LP only computes solutions that satisfy the observations counts. 
We use the operator-counting framework to improve our lower bounds since it restricts the set of solutions of the IP/LP to those that satisfy the operator-counting constraints. 

Definitions~\ref{def:complying-constraints} and~\ref{def:complying-ip} formally introduce the set of \textit{observation-counting constraints} and the IP/LP that ensures that the solution computed satisfies all observation counts.

\begin{definition}[Observation-Counting Constraints]
\label{def:complying-constraints}
Let $\tuple{\planningtask_{\textup{P}}, \goalconditions, \observations}$ be a \textit{goal recognition task} with operators \operators.
Let~$\mathcal{Y}$ be the set of \textit{operator-counting variables} for $\planningtask_{\textup{P}}$ with a variable~$\Y{o}$ for each operator $o \in \operators$, let~$\mathcal{Y}^{\observations}$ be a set of non-negative integer variables with a variable~$\Yobs{o}$ for each operator $o \in \operators$, and let~$\unreliability$ be the \textit{unreliability rating} of the sensor of observations.
The set of \textit{observation-counting constraints} \constraintso~consists of:
\begin{align}
    & \Yobs{o}\leq \occur_{\observations}(o)                                 & \text{for all } o \in \operators \label{eq:obs1} \\
    & \Yobs{o}\leq \Y{o}                                                 & \text{for all } o \in \operators \label{eq:obs2} \\ 
    & \sum_{\Yobs{o} \in \mathcal{Y}^{\observations}} \Yobs{o} \geq |\observations| -  \lfloor|\observations| * \unreliability\rfloor & \label{eq:obs3}\\
    & \Y{o},\Yobs{o}\in\Z^{+}_0. \nonumber
\end{align}
\end{definition}

In the IP/LP, the set of constrains~(\ref{eq:obs1}) limits the value of each~$\Yobs{o}$ by the number of occurrences of the operator~$o$ in $\observations$.
Next, the set of constraints~(\ref{eq:obs2}) binds the two sets~$\Yobs{o}$ and~$\Y{o}$ of variables.
This set of constraints guarantees that~$\Y{o}$ acts as an upper bound for~$\Yobs{o}$. 
Thus, to increase the count of~$\Yobs{o}$ the IP/LP solution must first increase the count of~$\Y{o}$ which is minimized in the objective function and can be restricted by the set of operator-counting constraints $\constraints$. 
Finally, constraint~(\ref{eq:obs3}) enforces the satisfaction of a subset of observations, since each~$\Yobs{o}$ is limited by the number of times~$o$ appears in $\observations$. 
$\unreliability$ is the \textit{unreliability rating} of the sensor that represents the expected percentage of mistaken observations. 
For now, consider that the sensor is perfect and $\unreliability$ is equal to zero.

\begin{definition}[Satisfying IP/LP Heuristic Function]
\label{def:complying-ip}
Let $\tuple{\planningtask_{\textup{P}}, \goalconditions, \observations}$ be a \textit{goal recognition task}, $\goalstate$ be one of the goal conditions of the task, and~$s$ be a state of $\planningtask_{\textup{P}}$.
Let \constraints{} be a set of operator-counting constraints for state~$s$ of planning task~$\planningtask_{\textup{P}}$ with goal condition~$\goalstate$, and $\constraintso{}$ be a set of observation-counting constraints for the goal recognition task.
Then, the \textit{satisfying} $\textup{IP/LP}$ heuristic function for the sets of constraints $\constraints$, and $\constraintso{}$ is,
\begin{align}
    \text{minimize}~\sum_{o\in \operators} \cost(o)\Y{o},~\text{subject to \constraints~and \constraintso}.\nonumber 
\end{align}

The \textit{satisfying} $\textup{IP}$ \textit{heuristic function} $\hoipPi{\goalstate}(s)$ is the objective value of the $\textup{IP}$, and the \textit{satisfying} $\textup{LP}$ \textit{heuristic function} $\hoPi{\goalstate}(s)$~is the objective value of its linear relaxation. 
If the $\textup{IP}$ or $\textup{LP}$ is infeasible, the heuristic estimate is $\infty$.
\end{definition} 

\begin{theorem}[\hoipPi{\goalstate} is a Lower Bound] The value of $\hoipPi{\goalstate}(s)$~is a lower bound on the cost of an optimal complying plan.
\label{theo:hip}
\end{theorem}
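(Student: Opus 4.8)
The plan is to prove the bound $\hoipPi{\goalstate}(s)\leq\hostarPi{\goalstate}(s)$ by exhibiting an optimal complying plan as a \emph{feasible} solution of the integer program in Definition~\ref{def:complying-ip} whose objective value equals its cost; since that program is a minimization, its optimum $\hoipPi{\goalstate}(s)$ can only be smaller. If no complying plan exists, then $\hostarPi{\goalstate}(s)=\infty$ and the bound holds trivially, so I would assume there is an optimal complying plan $\optimalplan$ for $\goalstate$ with finite cost, $\cost(\optimalplan)=\hostarPi{\goalstate}(s)$.

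For the construction I would read off the operator multiplicities of $\optimalplan$, setting $\Y{o}=\occur_{\optimalplan}(o)$ for every $o\in\operators$. The objective then evaluates to $\sum_{o\in\operators}\cost(o)\occur_{\optimalplan}(o)=\cost(\optimalplan)$, so the only remaining task is to show this assignment (together with appropriate values for the auxiliary variables) is feasible for both constraint blocks $\constraints$ and $\constraintso$. Feasibility of $\constraints$ is immediate from Definition~\ref{def:oc_constraint}: $\optimalplan$ is in particular a valid $s$-plan, so the operator-counting constraints admit a solution with exactly $\Y{o}=\occur_{\optimalplan}(o)$, and I would fix the induced values of any remaining variables of $\lpvariables$ accordingly. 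Crucially, $\constraints$ and $\constraintso$ share only the variables $\Y{o}$ (the $\Yobs{o}$ occur solely in $\constraintso$), so the solution guaranteed by Definition~\ref{def:oc_constraint} extends to a joint feasible assignment once the $\Yobs{o}$ are chosen.

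To satisfy $\constraintso$, I would use that $\optimalplan$ complies with $\observations$: by Definition~\ref{def:comply} there is an order-preserving, hence injective, map $f$ sending each matched observation index to a plan index carrying the same operator label. I would set $\Yobs{o}$ to the number of observations with label $o$ that $f$ matches into $\optimalplan$. Then constraint~(\ref{eq:obs1}) holds because the matched observations of label $o$ form a subset of all observations of that label, giving $\Yobs{o}\leq\occur_{\observations}(o)$; constraint~(\ref{eq:obs2}) holds because injectivity of $f$ sends distinct matched observations of label $o$ to distinct occurrences of $o$ in $\optimalplan$, so $\Yobs{o}\leq\occur_{\optimalplan}(o)=\Y{o}$; and constraint~(\ref{eq:obs3}) holds because $\sum_o\Yobs{o}$ counts all matched observations, which equals $|\observations|$ in the noise-free case and, under unreliability $\unreliability$, is still at least $|\observations|-\lfloor|\observations|\cdot\unreliability\rfloor$ for a plan that complies up to the allowed noise. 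With every constraint verified, the assignment is feasible with objective value $\cost(\optimalplan)=\hostarPi{\goalstate}(s)$, and minimality of the IP gives $\hoipPi{\goalstate}(s)\leq\hostarPi{\goalstate}(s)$.

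I expect the main obstacle to be the verification of~(\ref{eq:obs2}): it rests entirely on interpreting compliance as an order-preserving \emph{injection} of observations into plan steps, so that the multiplicity of each label among the matched observations cannot exceed that operator's multiplicity in the plan. Were $f$ merely non-decreasing rather than injective, two observations could collapse onto a single operator occurrence and $\Yobs{o}$ could exceed $\Y{o}$, breaking the construction; I would therefore make the injectivity (subsequence) reading of Definition~\ref{def:comply} explicit at this step.
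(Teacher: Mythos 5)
Your proposal is correct and follows essentially the same route as the paper's proof: exhibit the operator multiplicities of an optimal complying plan as a feasible assignment for the IP (setting $\Y{o}=\occur_{\plan}(o)$ and $\Yobs{o}$ to the observation counts), verify Constraints~(\ref{eq:obs1})--(\ref{eq:obs3}), and conclude by minimality of the IP objective. In fact, your treatment is slightly more careful than the paper's at the one delicate step, Constraint~(\ref{eq:obs2}): the paper argues it only by noting that each observation label appears in $\plan$, whereas you correctly identify that one needs the compliance map of Definition~\ref{def:comply} to be an order-preserving \emph{injection} (the subsequence reading), so that distinct observations of the same label account for distinct occurrences in the plan and hence $\Yobs{o}\leq\Y{o}$ holds at the level of multiplicities.
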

\begin{proof}
    In order to prove this property, we must show that any optimal complying plan satisfies Constraints~\ref{eq:obs1}-\ref{eq:obs3}, and that the solution to the objective cost of the resulting $\textup{LP}$ is a lower bound. 
To show this, let $\pi = \tuple{o_1, \dots, o_n}$ be an optimal complying $s$-plan for task \grplanningtask{} with goal condition $\goalstate$, and $s$ a state from $\grplanningtask{}$. 
First, since \constraints{} is a set of operator-counting constraints for state~$s$ of task~$\grplanningtask{}$ with goal condition~$\goalstate$, this set of constraints~$\constraints{}$ is satisfied by setting each~$\Y{o}$ equal to $\occur_{\pi}(o)$.

Second, $\constraintso$ is defined by the sequence of observations $\observations$. Thus, we can set each~$\Yobs{o}$ to $\occur_{\observations}(o)$. Using this attribution we satisfy constraints sets \ref{eq:obs1} and \ref{eq:obs3}.

Third, we need to show that the set of constraints \ref{eq:obs2} is satisfied. 
Since~$\pi$ is a complying plan it guarantees that there is a monotonic function~$f$ that maps all labels of operator indexes in $\observations$ to indexes in $\plan$, such that $\obs{o}_i = o_{f(i)}$. 
Thus, for each label~$\obs{o} \in \observations$ there is a label~$o \in \pi$ which guarantees~$\Yobs{o}\leq \Y{o}$ for all $o \in \mathcal{O}$.

These three points combined guarantee that the proposed attribution is a solution for the IP with $\constraints{}$ and $\constraintso$. 

Finally, the cost of optimal solution of the IP/LP can not exceed the cost of any specific solution, thus,

\begin{align}
    \hoPi{\goalstate}(s) \leq \hoipPi{\goalstate}(s) \leq \cost(\pi) = \hostarPi{\goalstate}(s).
\end{align}
\end{proof}

\paragraph{Computing the Solution Set.} Having defined $\hoPi{\goalstate}$ we use Definition~\ref{equation-cost-solution-set} to compute our solution set. 
We use $\hoPi{\goalstate}$ to estimate a lower bound of $\hostarPi{\goalstate}$, and $h_{s^*}$ to estimate a lower bound of $\hstarPi{\goalstate}$. 
For clarity, we repeat below the equations of Definition~\ref{equation-cost-solution-set} that compute our solution set $\dhc{}$ with the heuristics that estimate the lower bounds.%

\begin{equation}
\delta_{\text{min}} = \min_{\goalstate\in\goalconditions\text{ }:\text{ }\hoPi{\goalstate}(\initialstate) < \infty} \{\hoPi{\goalstate}(\initialstate) - \hPi{\goalstate}(\initialstate)\}
\label{equation-delta}
\end{equation}
\begin{equation}
\dhc = \{\goalstate \in \goalconditions \mid \hoPi{\goalstate}(\initialstate) - \hPi{\goalstate}(\initialstate) = \delta_{\text{min}}\}
\label{equation-solution-set}
\end{equation}

\paragraph{Noisy Observations.}
\label{sec:addressing_noisy_observations}

In most realistic settings, unreliable sensors may add noisy observations to the sequence of observations. 
Consider a goal recognition task in our running example with $\observations = \tuple{\obs{o}_4,\ldots,\obs{o}_{10},\obs{o}_{11}}$. 
Then, $\hoPi{\goalstate_1} = 13$ and $\hoPi{\goalstate_2} = 11$. 
In this situation we would have $\delta_{\text{min}} = 8$, and $\Gamma^\textup{LP} = \{\goalstate_2\}$. However, the observation~$~\obs{o}_{11}$ is unlikely to be part of any plan that generates the sequence of observations for either of the two goals. 

Our approach addresses noisy observations through the $\unreliability$ parameter (from the constraint of Eq~(\ref{eq:obs3})) which is the \textit{unreliability rating} of the sensor that represents the expected percentage of mistaken observations.  
The unreliability rating requires that at least $|\observations| -  \lfloor|\observations| * \unreliability\rfloor$ observations be satisfied by the solution found. 
If $\unreliability = 0$, all observations must be satisfied, whereas if $0 < \unreliability < 1$, some observations can be ignored in order to minimize the objective value of~$h_{\observations}$ for each goal candidate. 
Consider our example from Figure~\ref{fig:ex1} with $\observations = \tuple{ o_4,\ldots,o_{10},o_{11}}$ and $\unreliability = 0.2$. 
Here, the integer program~$\textup{IP}$ has to satisfy~$7$ observations, so $\hoPi{\goalstate_1} = 7$ and $\hoPi{\goalstate_2} = 9$. 
Recall that $\hstarPi{\goalstate_1}=3$ and $\hstarPi{\goalstate_2} = 3$. 
In this situation we have $\delta_{\text{min}} = 4$, and~$\dhc = \{\goalstate_1\}$.

\begin{theorem}[\hoipPi{\goalstate} is a Lower Bound in the Presence of Noisy Observations] If the \textit{unreliability rating} $\unreliability$ of the sensor is greater than zero and there at most $\lfloor|\observations| * \unreliability\rfloor$ noisy observations in $\observations{}$, then the value of $\hoipPi{\goalstate}(s)$~is a lower bound on the cost of an optimal complying plan.
\label{theo:hipnoisy}
\end{theorem}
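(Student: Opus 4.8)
The plan is to mirror the proof of Theorem~\ref{theo:hip}: I would exhibit a single feasible point of the satisfying $\textup{IP}$ whose objective value equals the cost of an optimal complying plan, so that the minimizing $\textup{IP}$ can only do at least as well. The sole new ingredient is that, with $\unreliability > 0$, constraint~(\ref{eq:obs3}) no longer forces \emph{all} observations to be covered, and this slack is exactly what lets the feasible point ignore the noisy labels. Let $\pi = \tuple{o_1, \ldots, o_n}$ be an optimal complying $s$-plan for \grplanningtask{} with goal condition \goalstate{}, where compliance is understood with respect to the genuine (non-noisy) observations. As in Theorem~\ref{theo:hip}, I would set $\Y{o} = \occur_{\pi}(o)$ for every $o \in \operators$; since \constraints{} is a set of operator-counting constraints for $s$, this attribution satisfies \constraints{}.

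The crux is the choice of the $\Yobs{o}$ variables. Write $\observations^{\text{c}}$ for the subsequence of \observations{} obtained by deleting the (at most $\lfloor|\observations| * \unreliability\rfloor$) noisy observations, so that $\observations^{\text{c}}$ is a genuine subsequence of labels extracted from $\pi$. I would set $\Yobs{o} = \occur_{\observations^{\text{c}}}(o)$ for every $o \in \operators$. Constraint~(\ref{eq:obs1}) then holds because deleting labels only decreases counts, so $\occur_{\observations^{\text{c}}}(o) \leq \occur_{\observations}(o)$. Constraint~(\ref{eq:obs2}) is where compliance is used: because every label of $\observations^{\text{c}}$ is matched by the monotonic function $f$ to a distinct operator occurrence of $\pi$, we obtain $\occur_{\observations^{\text{c}}}(o) \leq \occur_{\pi}(o) = \Y{o}$.

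The remaining step, and the only place the noise bound enters, is constraint~(\ref{eq:obs3}). Since $\sum_{o} \Yobs{o} = |\observations^{\text{c}}|$ and $\observations^{\text{c}}$ drops at most $\lfloor|\observations| * \unreliability\rfloor$ labels from \observations{}, we have $|\observations^{\text{c}}| \geq |\observations| - \lfloor|\observations| * \unreliability\rfloor$, which is precisely the right-hand side of~(\ref{eq:obs3}). Hence the proposed attribution is feasible for the $\textup{IP}$ with \constraints{} and \constraintso{}, and since an optimal objective value cannot exceed that of any feasible point, $\hoPi{\goalstate}(s) \leq \hoipPi{\goalstate}(s) \leq \cost(\pi) = \hostarPi{\goalstate}(s)$, giving the claimed lower bound for both the $\textup{IP}$ and its $\textup{LP}$ relaxation.

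The main obstacle is conceptual rather than computational: I must first pin down that an optimal complying plan need only comply with the true observations and is free to avoid the noisy ones, and then argue that the slack of $\lfloor|\observations| * \unreliability\rfloor$ deliberately built into~(\ref{eq:obs3}) is exactly enough to absorb every noisy label under the hypothesis that there are at most $\lfloor|\observations| * \unreliability\rfloor$ of them. Once the clean subsequence $\observations^{\text{c}}$ is isolated and this bound is invoked, the verification of the three constraint blocks reduces to the same counting arguments already used for Theorem~\ref{theo:hip}.
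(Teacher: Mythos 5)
Your proposal is correct and takes essentially the same approach as the paper's proof: both exhibit a feasible point built from an optimal plan $\pi$ complying with the clean observations, set $\Y{o} = \occur_{\pi}(o)$, choose the $\Yobs{o}$ so that noisy labels contribute nothing, and use the slack $\lfloor|\observations| \cdot \unreliability\rfloor$ in constraint~(\ref{eq:obs3}) to retain feasibility. The only difference is bookkeeping: the paper sets $\Yobs{o} = \occur_{\observations}(o)$ for $o \in \pi$ and $0$ otherwise, invoking Definition~\ref{def:observations_noisy} to argue that noisy labels name operators outside $\pi$, whereas you count occurrences in the explicit clean subsequence $\observations^{\text{c}}$ --- an equivalent attribution that, if anything, verifies constraint~(\ref{eq:obs2}) a bit more directly.
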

\begin{proof}
Let $\pi = \tuple{o_1, \dots, o_n}$ be an optimal complying $s$-plan for the non-noisy observation in $\observations{}$ for task $\grplanningtask{}$ with goal condition $\goalstate$, and $s$ a state from $\grplanningtask{}$. 

First, since $\pi$ is a valid plan we can satisfy $\constraints{}$ by setting each~$\Y{o}$ equal to $\occur_{\pi}(o)$ as before.

Second, note that if $\obs{o}$ is a noisy observation, then it must satisfy property~$\obs{o} \in \mathcal{O} - \pi$ according to Definition~\ref{def:observations_noisy}.
Thus, we can set each~$\Yobs{o}$ to $\occur_{\observations}(o)$ if $\obs{o} \in \pi$ and to zero otherwise. 
Using this attribution and the fact that~$\pi$ is an optimal complying $s$-plan for the non-noisy observations, we satisfy constraints sets \ref{eq:obs1} and \ref{eq:obs2}.

Third, we need to show that the set of constraint \ref{eq:obs3} is satisfied. 
Since there are at most $\lfloor|\observations| * \unreliability\rfloor$ noisy observations in $\observations{}$, then:

\begin{align*}
    \sum_{\Yobs{o} \in  \mathcal{Y}^{\observations} : o \in \pi}\Yobs{o} - \sum_{\Yobs{o} \in  \mathcal{Y}^{\observations} : o \in \mathcal{O} - \pi} 
    \Yobs{o}
   & = \sum_{o \in \pi} \occur_{\observations}(o)  \\
   & = \sum_{o \in \mathcal{O}} \occur_{\observations}(o) - \sum_{o \in \mathcal{O}-\pi} \occur_{\observations}(o)    \geq
    |\observations| -  \lfloor|\observations| * \unreliability\rfloor
\end{align*}

As before, these three points combined guarantee that the proposed attribution is a solution for the IP with $\constraints{}$ and $\constraintso$, and the cost of optimal solution of the IP/LP cannot exceed the cost of any specific solution.
\end{proof}

We now have observation-counting constraints within the framework of operator-counting that provide provably correct lower bounds to plans that comply with constraints. 
These constraints allow us to approximate the cost difference solution set from Definition~\ref{def:gr_solutionset_cost_diff} within the operator counting framework as an LP. 
However, the key advantage of our LP framework for goal recognition is that we can add further constraints (either from the Operator Counting Framework, or otherwise) and strengthen our estimates. 
Indeed, much like the operator counting framework, as we further constrain the LP to better match the set of operators included in real plans to solve a planning task, our estimates get closer to the exact computation of the solution set from \citeauthor{ramirez2009plan}~\citeyear{ramirez2009plan}. 
\section{Novel LP Constraints for Goal Recognition}
\label{sec:adapting_oc}

We now develop novel \textit{Operator-Counting constraints} specifically for goal recognition tasks. 
This is an entirely novel framework of \textit{Linear Programming} constraints that combines the traditional operator-counting constraints from Definition~\ref{def:oc_constraint} and the observation-counting constraints from Definition~\ref{def:complying-constraints}. 
Intuitively, a constraint for the operator-counting framework is an operator-counting constraint if every $s$-plan satisfies it. 
In other words, operator-counting constraints induce a linear program whose solutions reflect the space of actions required by their corresponding planning problem. 
Definition~\ref{def:oc_constraint_grt} extends this property to goal recognition problems. 
It formally states that a constraint for the operator-counting framework is an operator-counting constraint for a goal recognition task if every \emph{observation complying} $s$-plan satisfies it. 

\begin{definition}[Operator-Counting Constraints for a Goal Recognition Task] Let $\tuple{\planningtask_{\textup{P}}, \goalconditions, \observations}$ be a goal recognition task, 
let the planning task~$\planningtask$ resulting from the union of the task~$\planningtask_{\textup{P}}$ with a goal condition $\goalstate\in\goalconditions\cup\{ \pre(\obs{o}) | \obs{o} \in  \observations \}$. 
Let $s$ be one of the states of task~$\planningtask$, and $\mathcal{Y}$ be a set of real-valued and integer variables, including an operator-counting non-negative integer variable $\Y{o}$ for each operator $o \in \operators$.
A set of linear inequalities over $\mathcal{Y}$ is an \emph{operator-counting constraint for a goal recognition task} for the state~$s$ if for every valid complying $s$-plan~\plan{} of task~$\planningtask$, there exists a solution for the set of linear inequalities with $\Y{o} = \occur_{\plan}(o)$ for all $o\in\operators$.
\label{def:oc_constraint_grt}
\end{definition}

In what follows, we introduce new landmark constraints that use the observations to derive further landmarks tied to necessary conditions in plans that induce such observations. 
Key to these new constraints is their compatibility with noisy observations. 

Much like previous work on computationally efficient approaches for goal recognition \cite{pereira2020landmarks}, our IP formulation approximates the set of goals that are compatible with the observations without computing full plans like in \citet{ramirez2009plan}. 
Unlike such previous work, however, the operator-counting framework allows us to tighten our approximation by adding further constraints. 
In theory, if we could add enough constraints to obtain the perfect heuristic for the underlying planning problem for a goal hypothesis (and the perfect heuristic for the plans that comply with the observations) then we have RGs optimal recognition.

\label{sec:lmc_improved}

Recall the concept of disjunctive landmarks of~\citeauthor{Porteous:2001ud}~\citeyear{Porteous:2001ud}. 
Let~$s$ be a state of the task~$\planningtask_{\textup{P}}$ with a goal condition $\goalstate\in\goalconditions$. 
Then, a \emph{disjunctive action landmark for a goal recognition task}~$L$ for the state~$s$ is a set of operators such that at least one operator in~$L$ must be part of every complying $s$-plan. 
In the context of goal recognition, such landmarks have two key properties. 
First, when restricted to non-noisy observations, every operator part of the sequence of observations~$\observations$ is an action landmark for the goal recognition task. 
This means that the action corresponding to every observation must be part of any valid pseudo-plan for this task.
Second, the preconditions of these operators are sub-goals that must be satisfied by every complying plan. 
These subgoals are in addition to the goal condition that must be satisfied by every plan. 
Thus, we can use traditional landmark extraction techniques to generate new landmarks using the preconditions of observations. 
Definition~\ref{def:lm_constraints_goal_recognition} encodes this new type of constraint.

\begin{definition}[Landmark Constraint for Goal Recognition Tasks]
Let $L\in\operators$ be a disjunctive action landmark for a state~$s$ of a planning task~$\planningtask_{\textup{P}}$ with goal condition defined as $pre(\obs{o})$ with $\obs{o}\in\observations$. The \emph{constraint} for~$L$ is,
\begin{align}
    \sum_{o\in L} \Y{o} \geq [\Yobs{o} > 0].
\end{align}
We call the set of all such constraints $\lmc_\observations$. 
\label{def:lm_constraints_goal_recognition}
\end{definition}

The main difference between Definition~\ref{def:lm_constraints_goal_recognition} and traditional landmark constraints is that the bound of the constraint is equal to $[\Yobs{o} > 0]$ instead of~$1$ from Definition~\ref{def:lm_constraints}. 
This means that $\Y{o}$ will be one if we observe operator $o$, and zero otherwise. 
Since this formulation depends on variables $\Yobs{o}$ from Definition~\ref{def:complying-constraints}, constraints from the new Definition~\ref{def:lm_constraints_goal_recognition} must be used with our observation-counting constraints~$\constraintso$. 
In practice, we implement these constraints as: 
\begin{align*}
    0 \leq \sum_{o\in L} \Y{o} - \frac{\Yobs{o}}{\occur_{\observations}(o)} \leq \infty.
\end{align*}
To emulate the ``binary'' variable of the test on the corresponding observation constraint. 
This means that the value we are comparing against $\Y{o}$ depends on what operators we have in the observations.
In turn, this formulation means that the solver only needs to enforce constraints of an observation~$\Yobs{o}$ if the value of~$\Yobs{o}$ is greater than~$0$. 
This enables the $\textup{IP/LP}$ to address noisy observations automatically. 
Having defined landmark constraints for goal recognition tasks we can define the new~\holmcsg heuristic. 

\begin{definition}[Landmark Heuristic for Goal Recognition Tasks]
\label{def:lm-heuristic}
Let~$\lmc_\observations$ be set of constraints of \emph{landmark constraint for goal recognition tasks} computed using as goal conditions~$pre(\obs{o})$ of each~$\obs{o}\in\observations$. 
Then, the \holmcsg{} heuristic extends Definition~\ref{def:complying-ip} with the set of constraints $\lmc_\observations$, i.e., the $\textup{IP/LP}$ of~\holmcsg{} includes constraints \constraints, \constraintso, and $\lmc_\observations$. 
\end{definition} 

\begin{figure}[ht]
    \centering
    \includegraphics[scale=0.7]{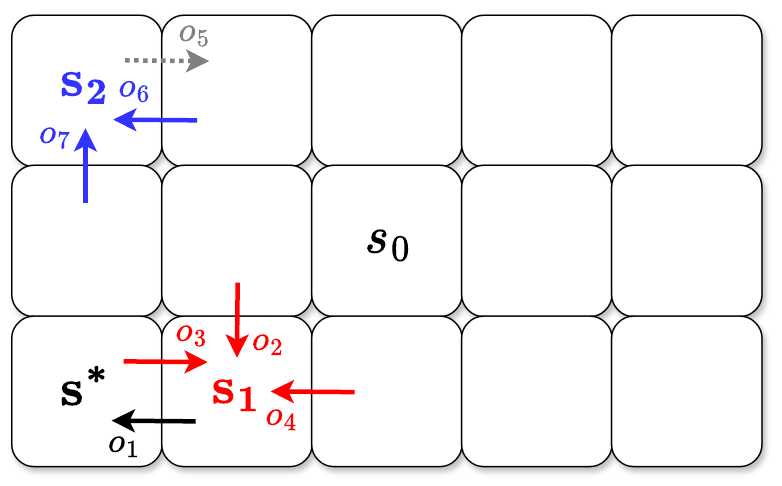}
    \caption{Example of landmark heuristic for a goal recognition task. States $s_1$ and $s_2$ and their corresponding disjunctive landmarks highlighted in red and blue, respectively.}
\label{fig:ex3}
\end{figure}

\begin{example}
Consider the example in Figure~\ref{fig:ex3} in which the goal recognition task has an initial state $\initialstate$, a goal hypothesis $\goalstate$, and two observations~$\observations=\tuple{\obs{o}_1,\obs{o}_5}$.
The observations are such that $\obs{o}_1$ is an actual step the observed plan from $\initialstate$ to $\goalstate$, and $\obs{o}_5$ is a noisy observation.
Suppose that the landmark-extraction technique generates, for each observation~$\obs{o}$ in $\observations$, a set of disjunctive action landmarks that are the operators that produce the cell from which the operator~$o$ corresponding to the observation~$\obs{o}$ is applied. 
Under these conditions both cells~$s_1$ (in red) and~$s_2$ (in blue) must be produced as landmarks from $\obs{o}_1$ and $\obs{o}_5$ respectively. 
Each of these states have a corresponding set of disjunctive landmarks, denoted by observations in the corresponding color. 
Observation~$\obs{o}_1$ generates the constraint $\Y{o_2} + \Y{o_3} + \Y{o_4} \geq [\Yobs{o_1}>0]$, and observation~$\obs{o}_2$ generates the constraint $\Y{o_6} + \Y{o_7} \geq [\Yobs{o_5} > 0]$. 
Suppose we want to compute~\holmcsg{} for~$s_0$ in Figure~\ref{fig:ex3} and that the \emph{unreliability rating}~$\unreliability$ of the sensor is equal to~$0.5$. 
Therefore, $\textup{IP/LP}$ of~\holmcsg{} only needs to increase the counts associated to one observation. 
Consequently, an optimal complying plan satisfying for~$\Yobs{o_1}$ costs three, and optimal complying plan satisfying for~$\Yobs{o_5}$ costs seven. %
Note that there are two optimal complying plans that satisfy~$\Yobs{o_1}$, and both also satisfy $\Y{o_2} + \Y{o_3} + \Y{o_4} \geq [\Yobs{o_1}>0]$, one uses~$o_2$ and another uses~$o_4$. 
Thus, the value of~\holmcsg{} for this example is three.
\end{example}

\begin{theorem}[\holmcsg{} is a Lower Bound] The value of \holmcsg{}~ solved as an $\textup{IP}$ is a lower bound on the cost of an optimal complying plan.
\label{theo:lm-heuristic-lower-bound}
\end{theorem}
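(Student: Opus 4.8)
The plan is to mirror the feasibility argument of Theorem~\ref{theo:hip}. By Definition~\ref{def:lm-heuristic}, the program for \holmcsg{} is exactly the satisfying $\textup{IP}$ of Definition~\ref{def:complying-ip} with the additional landmark constraints $\lmc_\observations$. Adding constraints can only shrink the feasible region, so the lower bound can only fail if the point corresponding to a genuine optimal complying plan is excluded; hence it suffices to show that this point remains feasible, after which the $\textup{IP}$ optimum is bounded above by the plan's cost exactly as before.

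First I would fix an optimal complying $s$-plan $\plan$ for \grplanningtask{} with goal condition $\goalstate$ and reuse the assignment from the proof of Theorem~\ref{theo:hip} (or Theorem~\ref{theo:hipnoisy} under noise): set $\Y{o}=\occur_{\plan}(o)$ for every $o\in\operators$, and set $\Yobs{o}=\occur_{\observations}(o)$ when $o\in\plan$ and $\Yobs{o}=0$ otherwise. By those theorems this assignment already satisfies $\constraints$ and $\constraintso$, so it only remains to verify the new constraints $\sum_{o'\in L}\Y{o'}\geq[\Yobs{o}>0]$ of Definition~\ref{def:lm_constraints_goal_recognition}.

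The key step is a case split on the indicator. When $\Yobs{o}=0$ the right-hand side is $0$ and the inequality holds trivially, since operator counts are non-negative; this case also absorbs every noisy observation, because by Definition~\ref{def:observations_noisy} a noisy label lies in $\operators-\plan$ and was assigned $\Yobs{o}=0$. When $\Yobs{o}>0$, the chosen assignment forces $o\in\plan$, so $\plan$ applies $o$ and must therefore pass through a state satisfying $\pre(\obs{o})$; the prefix of $\plan$ reaching that state is an $s$-plan for the subtask with goal condition $\pre(\obs{o})$, and since $L$ is a disjunctive action landmark for that subtask, this prefix, and hence $\plan$ itself, contains at least one operator of $L$, giving $\sum_{o'\in L}\occur_{\plan}(o')\geq 1=[\Yobs{o}>0]$.

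With all constraints satisfied, the assignment is feasible for the \holmcsg{} integer program, whence its optimal objective value is at most $\sum_{o\in\operators}\cost(o)\occur_{\plan}(o)=\cost(\plan)=\hostarPi{\goalstate}(s)$, establishing the lower bound. I expect the main obstacle to be the bookkeeping around the indicator term $[\Yobs{o}>0]$ and its linear realization $\Yobs{o}/\occur_{\observations}(o)$: I must argue that the landmark constraint is activated exactly when the observation is genuinely realized by $\plan$, and that noisy observations never activate a spurious landmark, so that feasibility of the real plan is preserved rather than destroyed by the added constraints.
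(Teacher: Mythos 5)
Your proposal is correct and follows essentially the same route as the paper's proof: both reuse the feasible assignment from Theorems~\ref{theo:hip} and~\ref{theo:hipnoisy} (setting $\Y{o}=\occur_{\plan}(o)$, and $\Yobs{o}=\occur_{\observations}(o)$ for $o\in\plan$, zero otherwise) and then verify the added constraints $\lmc_\observations$ by noting that noisy observations leave the indicator at zero while genuinely observed operators force $\plan$ to satisfy $\pre(\obs{o})$ and hence to contain an operator of each corresponding disjunctive landmark $L$, yielding $\holmcsg{}\leq\cost(\plan)=\hostarPi{\goalstate}(s)$. Your explicit case split on $[\Yobs{o}>0]$ and the prefix-of-$\plan$ argument are just a more detailed rendering of the paper's statement that ``every plan that satisfies $\pre(\obs{o})$ also satisfies $L$.''
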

\begin{proof}
To prove this property we must show that any optimal complying plan satisfies constraints \constraints, \constraintso, and $\lmc_\observations$, and that the solution to the objective function is a lower bound is a lower bound on the cost of an optimal complying plan. 
First, from Theorem~\ref{theo:hip} we have that the solution of an $\textup{IP}$ with constraints {\constraints} and {\constraintso} is a lower bound on the cost of an optimal complying plan. 
Second, from Theorem~\ref{theo:hipnoisy} we have that the solution of an $\textup{IP}$ is a lower bound on the cost of an optimal complying plan even if noisy observations are present. 
It remains to show that satisfying the constraints~$\lmc_\observations$ maintains the lower bound property.

Let $\pi = \tuple{o_1, \dots, o_n}$ be an optimal complying $s$-plan for the non-noisy observation in \observations{} for task \grplanningtask{} with goal condition \goalstate, and $s$ a state from \grplanningtask{}. 
From Theorem~\ref{theo:hipnoisy} we have that we can set each~$\Yobs{o}$ to $\occur_{\observations}(o)$ if $\obs{o} \in \pi$ and to zero otherwise. 
Thus, only non-noisy observations~$\Yobs{o}$ will have the value of the binary variable~$[\Yobs{o}>0]$ set to one. Without loss of generality, let $\lmc_\observations(\obs{o})$ be the set of disjunctive action landmarks for $\obs{o}$. Thus, we have,
\begin{align*}
    & \sum_{o\in L} \Y{o} \geq [\Yobs{o} > 0]                                 & \text{for all } L \in \lmc_\observations(\obs{o})
\end{align*}
Since~$L$ is a disjunctive action landmark for~$\obs{o}$. 
Then, every plan that satisfies condition~$pre(\obs{o})$ also satisfies~$L$. 
As the $s$-plan~$\pi$ satisfies~$pre(\obs{o})$, then the $s$-plan~$\pi$ also satisfies~$L$. 
Therefore, the counts included in~$\pi$ are sufficient to satisfy~$\lmc_\observations$. 
Thus, 
\begin{align}
    \hoipPi{\goalstate}(s) \leq \holmcsg{} \leq \cost(\pi) = \hostarPi{\goalstate}(s).
\end{align}
\end{proof}

Theorem~\ref{theo:lm-heuristic-lower-bound} ensures that the value of the $\holmcsg{}$ heuristic never overestimates the true cost of a plan that complies with all observations within the margin $\unreliability$ of observation error. 
This guarantee provides the resulting goal recognition approaches with two important properties. 
First, if $\holmcsg{}(s) = \infty$ for a goal hypothesis $\goalstate$, then any state compatible with this goal hypothesis is unreachable, and thus a recognition approach can safely filter this hypothesis out. 
Second, our lower bound guarantees that the difference between the heuristic we compute and the actual cost of an optimal plan ($\holmcsg{}(s) - \hoptimal_{\goalstate}(s)$) is never greater than that between the cost of the actual observation-complying plan and an optimal plan ($\hooptg(s) - \hoptimal_{\goalstate}(s)$). 
Thus, our strengthening of the heuristic never erroneously inflates the operator counts, and thus, whenever two goal hypotheses have similar heuristic values, their corresponding observation-complying plans will also be similar.%
\section{Experiments and Evaluation}
\label{sec:experiments and results}

This section outlines our empirical evaluation to assess the effectiveness of our goal recognition constraints in problems that reflect real-world recognition problems. 
We compare our novel LP-based approaches against key state-of-the-art approaches in \textit{Goal Recognition as Planning}~\cite{ramirez2009plan,ramirez2010probabilistic,pereira2020landmarks}.

\subsection{Experimental Setting}
\label{sec:experimental_setting}

We conducted extensive empirical experiments to evaluate how the new constraints impact the quality of the solution. 
Specifically, we compare the quality of the solutions of our base approach with observation-counting constraints against our improved approach with new specific constraints for goal recognition tasks. 
As we analyze the solution quality of each approach, we try to better understand how and when these new constraints improve the solution, as well as the cost and size of the LP models. 
Finally, we briefly compare our approaches with the compatible approaches available.

We ran all experiments with Ubuntu over an Intel Core i7 930 CPU ($2.80$GHz) with a $1$ GB memory limit. 
Our implementation uses Fast Downward version 19.06 \cite{helmert2006jair}, a Python prepossessing layer, and the CPLEX 12.10 LP solver.\footnote{Source-code and benchmarks are available at: \texttt{https://bit.ly/lp-goal-recognition}}

\begin{table}[h]
\centering
\fontsize{9}{10}\selectfont
\setlength\tabcolsep{2pt}
\begin{tabular}{c|c|c|cccccccccccc}
\toprule
\multicolumn{3}{c|}{} & \rotatebox[origin=c]{90}{ \textsc{blocks} } & \rotatebox[origin=c]{90}{ \textsc{depots} } & \rotatebox[origin=c]{90}{ \textsc{driverlog} } & \rotatebox[origin=c]{90}{ \textsc{dwr} } & \rotatebox[origin=c]{90}{ \textsc{ipc-grid} } & \rotatebox[origin=c]{90}{ \textsc{ferry} } & \rotatebox[origin=c]{90}{ \textsc{logistics} } & \rotatebox[origin=c]{90}{ \textsc{miconic} } & \rotatebox[origin=c]{90}{ \textsc{rovers} } & \rotatebox[origin=c]{90}{ \textsc{satellite} } & \rotatebox[origin=c]{90}{ \textsc{sokoban} } & \rotatebox[origin=c]{90}{ \textsc{zeno} }\\\midrule
\multicolumn{3}{c|}{$|\goalconditions|$} & 20.0 & 8.0 & 6.7 & 6.7 & 7.5 & 6.3 & 10.0 & 6.0 & 6.0 & 6.0 & 8.3 & 6.0\\\midrule
\multirow{10}{*}{ \rotatebox[origin=c]{90}{\textsc{Optimal}} }& \multicolumn{1}{c}{ \multirow{5}{*}{$|\observations|$} }& \multicolumn{1}{|c|}{10\%}& 1.5& 1.0& 1.8& 3.0& 1.5& 2.2& 2.0& 2.0& 1.7& 1.3& 2.3& 1.7\\
& \multicolumn{1}{c}{}& \multicolumn{1}{|c|}{30\%}& 3.7& 2.8& 4.2& 7.3& 3.6& 6.0& 5.8& 5.5& 3.7& 3.3& 6.3& 4.0\\
& \multicolumn{1}{c}{}& \multicolumn{1}{|c|}{50\%}& 5.2& 4.7& 6.5& 12.0& 5.4& 9.7& 9.3& 8.5& 5.7& 5.7& 10.2& 6.2\\
& \multicolumn{1}{c}{}& \multicolumn{1}{|c|}{70\%}& 7.7& 6.7& 9.2& 17.0& 7.5& 13.5& 13.2& 11.8& 8.0& 8.0& 14.7& 8.8\\
& \multicolumn{1}{c}{}& \multicolumn{1}{|c|}{100\%}& 10.3& 9.3& 12.3& 23.3& 10.2& 18.8& 18.0& 16.3& 10.7& 10.5& 20.0& 12.2\\
\cline{2-15}
& \multicolumn{1}{c}{ \multirow{5}{*}{$|\grsolution|$} }& \multicolumn{1}{|c|}{10\%} & 8.3 & 3.7 & 1.3 & 3.8 & 2.8 & 2.8 & 3.5 & 2.3 & 2.3 & 3.5 & 2.0 & 2.7\\
& \multicolumn{1}{c}{}& \multicolumn{1}{|c|}{30\%} & 2.0 & 1.7 & 2.0 & 2.0 & 1.2 & 1.3 & 1.0 & 1.0 & 1.2 & 3.5 & 1.5 & 1.3\\
& \multicolumn{1}{c}{}& \multicolumn{1}{|c|}{50\%} & 1.2 & 1.3 & 1.2 & 1.3 & 1.1 & 1.2 & 1.0 & 1.0 & 1.2 & 1.7 & 1.3 & 1.2\\
& \multicolumn{1}{c}{}& \multicolumn{1}{|c|}{70\%} & 1.2 & 1.2 & 1.0 & 1.2 & 1.1 & 1.2 & 1.2 & 1.0 & 1.0 & 1.5 & 1.0 & 1.0\\
& \multicolumn{1}{c}{}& \multicolumn{1}{|c|}{100\%} & 1.2 & 1.0 & 1.0 & 1.0 & 1.0 & 1.0 & 1.0 & 1.0 & 1.0 & 1.3 & 1.0 & 1.0\\
\midrule
\multirow{10}{*}{ \rotatebox[origin=c]{90}{\textsc{Suboptimal}} }& \multicolumn{1}{c}{ \multirow{5}{*}{$|\observations|$} }& \multicolumn{1}{|c|}{10\%}& 1.7& 1.8& 2.2& 3.5& 1.9& 3.0& 2.7& 3.0& 1.8& 2.0& 3.3& 2.0\\
& \multicolumn{1}{c}{}& \multicolumn{1}{|c|}{30\%}& 4.3& 4.5& 5.7& 9.7& 5.1& 8.5& 7.3& 7.8& 4.3& 4.3& 8.7& 5.5\\
& \multicolumn{1}{c}{}& \multicolumn{1}{|c|}{50\%}& 6.7& 6.7& 9.0& 15.3& 8.1& 13.7& 11.7& 12.5& 7.0& 6.7& 13.5& 8.3\\
& \multicolumn{1}{c}{}& \multicolumn{1}{|c|}{70\%}& 9.7& 9.7& 12.7& 21.3& 11.4& 19.2& 16.3& 17.5& 9.8& 9.2& 19.0& 12.0\\
& \multicolumn{1}{c}{}& \multicolumn{1}{|c|}{100\%}& 13.3& 13.3& 17.3& 30.0& 15.8& 26.8& 22.7& 24.3& 13.3& 12.5& 26.7& 16.5\\
\cline{2-15}
& \multicolumn{1}{c}{ \multirow{5}{*}{$|\grsolution|$} }& \multicolumn{1}{|c|}{10\%} & 7.3 & 1.3 & 2.2 & 3.8 & 1.6 & 3.0 & 1.5 & 1.8 & 2.5 & 3.8 & 1.8 & 2.2\\
& \multicolumn{1}{c}{}& \multicolumn{1}{|c|}{30\%} & 2.0 & 1.5 & 1.3 & 1.2 & 1.4 & 1.2 & 1.2 & 1.0 & 1.3 & 1.7 & 1.3 & 1.2\\
& \multicolumn{1}{c}{}& \multicolumn{1}{|c|}{50\%} & 1.2 & 1.0 & 1.0 & 1.2 & 2.1 & 1.2 & 1.0 & 1.0 & 1.0 & 1.7 & 1.7 & 1.0\\
& \multicolumn{1}{c}{}& \multicolumn{1}{|c|}{70\%} & 1.5 & 1.0 & 1.2 & 1.2 & 2.1 & 1.2 & 1.0 & 1.0 & 1.0 & 1.3 & 1.5 & 1.0\\
& \multicolumn{1}{c}{}& \multicolumn{1}{|c|}{100\%} & 1.2 & 1.0 & 1.0 & 1.0 & 2.0 & 1.2 & 1.0 & 1.0 & 1.0 & 1.3 & 1.5 & 1.0\\
\bottomrule
\end{tabular}\\ %
\caption{Key properties of each experimental domain.}
\label{tab:benchmark-domains} %
\end{table}

For this analysis, we create a new benchmark by adapting the benchmark introduced by \citeauthor{pereira2017landmark} \shortcite{pereira2017landmark}. 
We evaluate the approaches primarily by using the \emph{agreement ratio} evaluation metric from~\citeauthor{ramirez2009plan}~\shortcite{ramirez2009plan}. 
Recall that Ramirez and Geffner define the \emph{agreement ratio} as the intersection over union~$|\grsolution \cap \goalconditions|/|\grsolution \cup \goalconditions|$ of the reference solution set~$\grsolution$ against the solution~$\goalconditions$ provided by the approach. 
We use \emph{agreement ratio} because it penalizes two key potential flaws in solutions to goal recognition problems. 
First, approaches that return many goals that are not part of the reference solution set, i.e., the approach returns false negatives. 
Second, approaches that return few goals that are part of the reference solution set, i.e., the approach fails to return goals compatible with the evidence. 
Thus, using the \emph{agreement ratio} we have a single metric that accounts for the two most important aspects when evaluating approaches for solving goal recognition tasks.

For each domain, we create three base planning tasks~$\planningtask_{\textup{P}}$ each (except for \textsc{IPC-Grid}, in which we create four) with four reference goals conditions each. 
For each pair (planning tasks~$\planningtask_{\textup{P}}$, reference goal condition) we compute a plan from which we extract the sequence of observations. 
We compute optimal and suboptimal plans for each pair creating two different data sets. 
We compute suboptimal plans using weighted A$^*$ with $w = 2$~\cite{pohl1970heuristic} to emulate bounded rationality.

Following previous work, we build the benchmark data set with five different levels of observability: 10\%, 30\%, 50\%, 70\% and 100\%. 
We only generate one sequence of observations for 100\% of observability, and three different random observation sequences extracted from the same plan for other observability levels, yielding 208 goal recognition tasks in total for \textsc{IPC-Grid} and 156 for each one of the other domains in each data set (optimal and suboptimal). 
For each data set, we also create an additional corresponding noisy data set by adding $\lceil|\observations| * 0.2\rceil$ randomly generated observations in each sequence of observations---i.e., the fault rate of the sensor is 20\%. 
For each goal recognition task we add at least five randomly generated candidate goal conditions. 
In total, we have $8,288$ goal recognition tasks divided into four data sets.

We compute the reference solution set~$\grsolution$ for each goal recognition task for optimal and suboptimal data sets. 
Thus, for each pair of base planning task and goal candidate, we compute an optimal plan, and one optimal complying plan for each sequence of observations of an optimal data set, and one bounded suboptimal plan and bounded suboptimal complying plan. 
This a computationally intensive problem, but is performed only once for the creation of the benchmark dataset. 
The noisy data sets have the same solution as their corresponding optimal and suboptimal counterparts. 
For the optimal data set we use A$^*$, whereas for the suboptimal data set we use weighted A$^*$ with $w = \frac{\cost(\plan)}{\hstarPi{\rgoal}(\initialstate)}$ (following Definition~\ref{def:gr_solutionset}). 
To ensure replicability and facilitate further research on goal recognition, this benchmark is available publicly.\footnote{At \url{https://github.com/luisaras/goal-plan-recognition-dataset}}
Table~\ref{tab:benchmark-domains} summarizes the information about the data sets. 
The domains we use are \textsc{Blocks}, \textsc{Depots}, \textsc{Driverlog}, \textsc{DWR}, \textsc{IPC-Grid}, \textsc{Ferry}, \textsc{Logistics}, \textsc{Miconic}, \textsc{Rovers}, \textsc{Satellite}, \textsc{Sokoban} and \textsc{Zeno}. 
For each domain row, $|\goalconditions|$ represents the average number of candidate goals.
Columns $|\observations|$ and $|\grsolution|$ show the average size of the observations and the reference solution set, respectively. 
The average size of the plan with 100\% of observability indicates the size of the plan computed for the reference goal.
As expected, the average sizes $|\observations|$ and $|\grsolution|$ are larger for the suboptimal data set than for the optimal data set. %

\subsection{Evaluation and Comparison of the Novel LP-Based Approaches}
\label{sec:evaluation}

In what follows, we evaluate our extensions to the LP heuristic with \textit{Landmark Constraints} ({\holmcsg{}}) and compare it to its ``basic'' version (\holmc{}). 
Tables~\ref{tab:lm1} and~\ref{tab:lm2} summarize these results. 
Tables~\ref{tab:lmc-optimal}--\ref{tab:lmc-optimal-noisy} summarize the results over all domains for ``simplified'' problems, that is problems that do not include suboptimal or noisy observations. 
More specifically: Table~\ref{tab:lmc-optimal} shows results for the data set with noise-free observations from optimal plans; Table~\ref{tab:lmc-sub-optimal} shows results for the data set with noise-free observations from suboptimal plans, and Table~\ref{tab:lmc-optimal-noisy} shows results from the data set with noisy observations from optimal plans. 
By contrast, Table~\ref{tab:lm2} shows results for the hardest version of the goal recognition problems, which consist of observations generated from suboptimal plans with added noise.  
For both noisy data sets, we use a noise filter (see Section~\ref{sec:addressing_noisy_observations}) of $\epsilon = 0.2$. 
Each table compares the heuristics across five metrics: 
\textbf{Agr} is the average \emph{agreement ratio} for that set of problems; 
\textbf{\ho} is the average value of that version of the heuristic for the \emph{reference goal}; 
\textbf{Rows} denote the average number of constraints (i.e., rows) in the LP responsible for computing the heuristic; 
the \textbf{total time} is the time, in seconds, including both the Fast Downward and Python calls; and 
\textbf{LP time} is the time to solve the LP in the Fast Downward after all the pre-processing. 

\begin{table*}
    \begin{center}
    \begin{subtable}[b]{.6\textwidth}
        \begin{center}
        \fontsize{8.}{9.}\selectfont
        \setlength\tabcolsep{1pt}
\begin{tabular}{cc|ccccc|ccccc}
\toprule
\multicolumn{2}{c}{} %
& \multicolumn{5}{c|}{\holmc{}}
& \multicolumn{5}{c}{\holmcsg{}}\\
\cmidrule(lr){3-7} \cmidrule(lr){8-12}
\multicolumn{2}{c}{}& & & & \multicolumn{2}{c}{Time}& & & & \multicolumn{2}{c}{Time}\\
\cmidrule(lr){6-7} \cmidrule(lr){11-12}
\# & \textbf{\%}
& \textbf{Agr}  & \textbf{$h^\Omega$}  & \textbf{Rows}  & \textbf{Total}  & \textbf{LP} 
& \textbf{Agr}  & \textbf{$h^\Omega$}  & \textbf{Rows}  & \textbf{Total}  & \textbf{LP} 

\\ 
\hline %

\multicolumn{2}{c|}{10\%} 
& 0.69 & 11.2 & 14.9 & 0.62 & 0.01 	 

& \textbf{0.71} & 11.4 & 19.4 & 0.62 & 0.01 	 
 \\
\multicolumn{2}{c|}{30\%} 
& 0.67 & 11.8 & 17.6 & 0.62 & 0.01 	 

& \textbf{0.73} & 12.1 & 29.3 & 0.63 & 0.01 	 
 \\
\multicolumn{2}{c|}{50\%} 
& 0.75 & 12.3 & 20.2 & 0.62 & 0.01 	 

& \textbf{0.78} & 12.8 & 39.0 & 0.63 & 0.02 	 
 \\
\multicolumn{2}{c|}{70\%} 
& 0.84 & 13.1 & 23.1 & 0.62 & 0.01 	 

& \textbf{0.86} & 13.5 & 49.7 & 0.63 & 0.02 	 
 \\
\multicolumn{2}{c|}{100\%} 
& \textbf{0.91} & 14.3 & 26.6 & 0.62 & 0.01 	 

& \textbf{0.91} & 14.3 & 62.7 & 0.63 & 0.02 	 
 \\\hline
\multicolumn{2}{c|}{AVG} & 0.77 & 12.5 & 20.5 & 0.62 & 0.01& \textbf{0.80} & 12.8 & 40.0 & 0.63 & 0.02 %
\\ \bottomrule
\end{tabular}

         \caption{Results for optimal noise-free observations.}
        \label{tab:lmc-optimal}
        \end{center}
    \end{subtable}
    \quad
    \begin{subtable}[b]{.6\textwidth}
        \begin{center}
        \fontsize{8.}{9.}\selectfont
        \setlength\tabcolsep{1pt}
\begin{tabular}{cc|ccccc|ccccc}
\toprule
\multicolumn{2}{c}{} %
& \multicolumn{5}{c|}{\holmc{}}
& \multicolumn{5}{c}{\holmcsg{}}\\
\cmidrule(lr){3-7} \cmidrule(lr){8-12}
\multicolumn{2}{c}{}& & & & \multicolumn{2}{c}{Time}& & & & \multicolumn{2}{c}{Time}\\
\cmidrule(lr){6-7} \cmidrule(lr){11-12}
\# & \textbf{\%}
& \textbf{Agr}  & \textbf{$h^\Omega$}  & \textbf{Rows}  & \textbf{Total}  & \textbf{LP} 
& \textbf{Agr}  & \textbf{$h^\Omega$}  & \textbf{Rows}  & \textbf{Total}  & \textbf{LP} 

\\ 
\hline %

\multicolumn{2}{c|}{10\%} 
& 0.6 & 11.6 & 15.4 & 0.62 & 0.01 	 

& \textbf{0.66} & 11.7 & 21.6 & 0.62 & 0.01 	 
 \\
\multicolumn{2}{c|}{30\%} 
& 0.67 & 12.8 & 18.9 & 0.62 & 0.01 	 

& \textbf{0.73} & 13.2 & 34.7 & 0.63 & 0.01 	 
 \\
\multicolumn{2}{c|}{50\%} 
& 0.73 & 14.3 & 22.0 & 0.62 & 0.01 	 

& \textbf{0.76} & 14.7 & 45.9 & 0.63 & 0.02 	 
 \\
\multicolumn{2}{c|}{70\%} 
& 0.82 & 16.3 & 25.1 & 0.62 & 0.01 	 

& \textbf{0.83} & 16.6 & 58.0 & 0.63 & 0.02 	 
 \\
\multicolumn{2}{c|}{100\%} 
& \textbf{0.88} & 19.3 & 28.9 & 0.62 & 0.01 	 

& \textbf{0.88} & 19.3 & 72.6 & 0.64 & 0.02 	 
 \\\hline
\multicolumn{2}{c|}{AVG} & 0.74 & 14.9 & 22.1 & 0.62 & 0.01& \textbf{0.77} & 15.1 & 46.6 & 0.63 & 0.02 %
\\ \bottomrule
\end{tabular}

         \end{center}
        \caption{Results for suboptimal noise-free observations.}
        \label{tab:lmc-sub-optimal}
    \end{subtable}
    \quad
    \begin{subtable}[b]{.6\textwidth}
        \begin{center}
        \fontsize{8.}{9.}\selectfont
        \setlength\tabcolsep{1pt}
\begin{tabular}{cc|ccccc|ccccc}
\toprule
\multicolumn{2}{c}{} %
& \multicolumn{5}{c|}{\holmc{}}
& \multicolumn{5}{c}{\holmcsg{}}\\
\cmidrule(lr){3-7} \cmidrule(lr){8-12}
\multicolumn{2}{c}{}& & & & \multicolumn{2}{c}{Time}& & & & \multicolumn{2}{c}{Time}\\
\cmidrule(lr){6-7} \cmidrule(lr){11-12}
\# & \textbf{\%}
& \textbf{Agr}  & \textbf{$h^\Omega$}  & \textbf{Rows}  & \textbf{Total}  & \textbf{LP} 
& \textbf{Agr}  & \textbf{$h^\Omega$}  & \textbf{Rows}  & \textbf{Total}  & \textbf{LP} 

\\ 
\hline %

\multicolumn{2}{c|}{10\%} 
& 0.48 & 11.2 & 15.5 & 0.62 & 0.01 	 

& \textbf{0.49} & 11.3 & 22.7 & 0.62 & 0.01 	 
 \\
\multicolumn{2}{c|}{30\%} 
& 0.51 & 11.6 & 18.5 & 0.62 & 0.01 	 

& \textbf{0.55} & 11.9 & 33.8 & 0.63 & 0.01 	 
 \\
\multicolumn{2}{c|}{50\%} 
& 0.62 & 12.0 & 21.5 & 0.62 & 0.01 	 

& \textbf{0.68} & 12.4 & 45.0 & 0.63 & 0.02 	 
 \\
\multicolumn{2}{c|}{70\%} 
& 0.79 & 12.5 & 24.8 & 0.62 & 0.01 	 

& \textbf{0.81} & 12.9 & 57.6 & 0.63 & 0.02 	 
 \\
\multicolumn{2}{c|}{100\%} 
& 0.87 & 13.6 & 28.9 & 0.62 & 0.01 	 

& \textbf{0.88} & 13.6 & 74.0 & 0.64 & 0.03 	 
 \\\hline
\multicolumn{2}{c|}{AVG} & 0.66 & 12.2 & 21.8 & 0.62 & 0.01& \textbf{0.68} & 12.4 & 46.6 & 0.63 & 0.02 %
\\ \bottomrule
\end{tabular}

         \caption{Results for noisy observations (optimal plans).}
        \label{tab:lmc-optimal-noisy}
        \end{center}
    \end{subtable}
    \caption{Aggregated average results using the \textit{Landmark} constraints.}
    \label{tab:lm1}
    \end{center}
\end{table*}

\begin{table*}
    \begin{center}
    \fontsize{8.}{9.}\selectfont
    \setlength\tabcolsep{1pt}
\begin{tabular}{cc|ccccc|ccccc}
\toprule
\multicolumn{2}{c}{} %
& \multicolumn{5}{c|}{\holmc{}}
& \multicolumn{5}{c}{\holmcsg{}}\\
\cmidrule(lr){3-7} \cmidrule(lr){8-12}
\multicolumn{2}{c}{}& & & & \multicolumn{2}{c}{Time}& & & & \multicolumn{2}{c}{Time}\\
\cmidrule(lr){6-7} \cmidrule(lr){11-12}
\# & \textbf{\%}
& \textbf{Agr}  & \textbf{$h^\Omega$}  & \textbf{Rows}  & \textbf{Total}  & \textbf{LP} 
& \textbf{Agr}  & \textbf{$h^\Omega$}  & \textbf{Rows}  & \textbf{Total}  & \textbf{LP} 

\\ 
\hline %

\multirow{5}{*}{\rotatebox[origin=c]{90}{\textsc{blocks}}} 
	 & 10

& 0.35 & 6.3 & 10.2 & 1.46 & 0.02 	 

& \textbf{0.37} & 6.4 & 14.3 & 1.46 & 0.02 	 

	\\ & 30

& \textbf{0.38} & 7.3 & 12.5 & 1.46 & 0.02 	 

& 0.35 & 7.4 & 19.2 & 1.47 & 0.03 	 

	\\ & 50

& \textbf{0.45} & 7.5 & 14.7 & 1.46 & 0.02 	 

& 0.42 & 7.7 & 23.6 & 1.47 & 0.03 	 

	\\ & 70

& \textbf{0.44} & 9.4 & 17.5 & 1.46 & 0.02 	 

& 0.43 & 9.6 & 30.4 & 1.48 & 0.03 	 

	\\ & 100

& \textbf{0.56} & 11.1 & 20.7 & 1.46 & 0.02 	 

& 0.54 & 11.1 & 38.3 & 1.48 & 0.04 	 
 \\ \hline
\multirow{5}{*}{\rotatebox[origin=c]{90}{\textsc{depots}}} 
	 & 10

& 0.30 & 6.8 & 10.2 & 0.62 & 0.01 	 

& \textbf{0.37} & 7.1 & 14.5 & 0.62 & 0.01 	 

	\\ & 30

& 0.32 & 7.1 & 12.8 & 0.62 & 0.01 	 

& \textbf{0.40} & 7.9 & 22.9 & 0.62 & 0.01 	 

	\\ & 50

& 0.38 & 8.7 & 14.9 & 0.62 & 0.01 	 

& \textbf{0.65} & 9.6 & 30.6 & 0.63 & 0.01 	 

	\\ & 70

& 0.66 & 10.2 & 17.4 & 0.62 & 0.01 	 

& \textbf{0.77} & 10.6 & 38.3 & 0.63 & 0.02 	 

	\\ & 100

& 0.9 & 12.8 & 20.0 & 0.62 & 0.01 	 

& \textbf{0.93} & 12.8 & 46.9 & 0.63 & 0.02 	 
 \\ \hline
\multirow{5}{*}{\rotatebox[origin=c]{90}{\textsc{driverlog}}} 
	 & 10

& 0.38 & 10.1 & 14.4 & 0.47 & 0.01 	 

& \textbf{0.40} & 10.2 & 22.7 & 0.47 & 0.01 	 

	\\ & 30

& 0.44 & 10.6 & 18.1 & 0.47 & 0.01 	 

& \textbf{0.55} & 11.4 & 37.0 & 0.47 & 0.01 	 

	\\ & 50

& 0.51 & 11.9 & 21.1 & 0.47 & 0.01 	 

& \textbf{0.60} & 12.4 & 48.4 & 0.48 & 0.01 	 

	\\ & 70

& \textbf{0.70} & 13.2 & 24.6 & 0.47 & 0.01 	 

& 0.69 & 13.6 & 61.2 & 0.48 & 0.01 	 

	\\ & 100

& 0.65 & 16.2 & 28.9 & 0.47 & 0.01 	 

& \textbf{0.68} & 16.2 & 79.2 & 0.48 & 0.02 	 
 \\ \hline
\multirow{5}{*}{\rotatebox[origin=c]{90}{\textsc{dwr}}} 
	 & 10

& \textbf{0.44} & 11.3 & 16.5 & 0.54 & 0.01 	 

& 0.41 & 11.6 & 28.2 & 0.55 & 0.01 	 

	\\ & 30

& \textbf{0.37} & 14.1 & 21.9 & 0.54 & 0.01 	 

& 0.35 & 15.1 & 52.2 & 0.55 & 0.02 	 

	\\ & 50

& 0.38 & 16.9 & 26.2 & 0.54 & 0.01 	 

& \textbf{0.45} & 17.8 & 74.4 & 0.56 & 0.02 	 

	\\ & 70

& 0.54 & 21.0 & 29.9 & 0.54 & 0.01 	 

& \textbf{0.59} & 21.9 & 91.7 & 0.56 & 0.03 	 

	\\ & 100

& 0.73 & 27.0 & 35.3 & 0.54 & 0.01 	 

& \textbf{0.78} & 27.0 & 120.3 & 0.57 & 0.03 	 
 \\ \hline
\multirow{5}{*}{\rotatebox[origin=c]{90}{\textsc{ipc-grid}}} 
	 & 10

& 0.68 & 11.3 & 15.1 & 0.58 & 0.01 	 

& \textbf{0.77} & 11.3 & 28.8 & 0.58 & 0.01 	 

	\\ & 30

& \textbf{0.77} & 12.1 & 18.5 & 0.58 & 0.01 	 

& \textbf{0.77} & 12.2 & 50.0 & 0.59 & 0.01 	 

	\\ & 50

& \textbf{0.88} & 12.7 & 21.5 & 0.58 & 0.01 	 

& \textbf{0.88} & 12.8 & 68.4 & 0.59 & 0.02 	 

	\\ & 70

& \textbf{0.90} & 14.3 & 24.8 & 0.58 & 0.01 	 

& \textbf{0.90} & 14.4 & 88.5 & 0.59 & 0.02 	 

	\\ & 100

& \textbf{0.94} & 16.6 & 28.1 & 0.58 & 0.01 	 

& \textbf{0.94} & 16.6 & 110.2 & 0.59 & 0.02 	 
 \\ \hline
\multirow{5}{*}{\rotatebox[origin=c]{90}{\textsc{ferry}}} 
	 & 10

& 0.37 & 14.0 & 19.4 & 0.40 & 0.01 	 

& \textbf{0.38} & 14.0 & 25.8 & 0.40 & 0.01 	 

	\\ & 30

& \textbf{0.51} & 15.6 & 25.1 & 0.40 & 0.01 	 

& \textbf{0.51} & 15.6 & 40.0 & 0.40 & 0.01 	 

	\\ & 50

& \textbf{0.81} & 18.5 & 29.7 & 0.40 & 0.01 	 

& \textbf{0.81} & 18.5 & 52.6 & 0.40 & 0.01 	 

	\\ & 70

& \textbf{0.87} & 21.4 & 34.0 & 0.40 & 0.01 	 

& \textbf{0.87} & 21.4 & 64.6 & 0.40 & 0.01 	 

	\\ & 100

& \textbf{0.94} & 26.4 & 39.4 & 0.40 & 0.01 	 

& \textbf{0.94} & 26.4 & 80.4 & 0.40 & 0.01 	 
 \\ \hline
\multirow{5}{*}{\rotatebox[origin=c]{90}{\textsc{logistics}}} 
	 & 10

& 0.59 & 17.0 & 21.8 & 0.69 & 0.01 	 

& \textbf{0.62} & 17.0 & 33.8 & 0.69 & 0.01 	 

	\\ & 30

& \textbf{0.84} & 17.4 & 26.5 & 0.69 & 0.01 	 

& \textbf{0.84} & 17.4 & 56.2 & 0.70 & 0.02 	 

	\\ & 50

& \textbf{0.95} & 18.1 & 30.9 & 0.69 & 0.01 	 

& \textbf{0.95} & 18.1 & 76.4 & 0.70 & 0.02 	 

	\\ & 70

& 0.94 & 19.8 & 35.4 & 0.69 & 0.01 	 

& \textbf{0.96} & 19.9 & 96.9 & 0.71 & 0.02 	 

	\\ & 100

& \textbf{1.0} & 22.2 & 41.1 & 0.69 & 0.01 	 

& \textbf{1.0} & 22.3 & 122.8 & 0.71 & 0.03 	 
 \\ \hline
\multirow{5}{*}{\rotatebox[origin=c]{90}{\textsc{miconic}}} 
	 & 10

& 0.46 & 15.9 & 21.8 & 0.43 & 0.01 	 

& \textbf{0.54} & 15.9 & 26.5 & 0.43 & 0.01 	 

	\\ & 30

& 0.72 & 16.6 & 27.0 & 0.43 & 0.01 	 

& \textbf{0.80} & 16.8 & 38.3 & 0.43 & 0.01 	 

	\\ & 50

& 0.86 & 17.6 & 32.0 & 0.43 & 0.01 	 

& \textbf{0.93} & 17.6 & 48.9 & 0.43 & 0.01 	 

	\\ & 70

& 0.90 & 19.4 & 36.8 & 0.43 & 0.01 	 

& \textbf{0.93} & 19.5 & 60.0 & 0.44 & 0.02 	 

	\\ & 100

& \textbf{1.0} & 23.0 & 43.0 & 0.43 & 0.01 	 

& \textbf{1.0} & 23.0 & 74.3 & 0.44 & 0.02 	 
 \\ \hline
\multirow{5}{*}{\rotatebox[origin=c]{90}{\textsc{rovers}}} 
	 & 10

& \textbf{0.54} & 9.7 & 14.1 & 0.47 & 0.01 	 

& 0.52 & 9.9 & 17.8 & 0.47 & 0.01 	 

	\\ & 30

& 0.65 & 10.2 & 16.9 & 0.47 & 0.01 	 

& \textbf{0.66} & 10.5 & 24.4 & 0.47 & 0.01 	 

	\\ & 50

& 0.85 & 10.6 & 19.7 & 0.47 & 0.01 	 

& \textbf{0.86} & 11.1 & 31.4 & 0.48 & 0.01 	 

	\\ & 70

& \textbf{0.96} & 11.9 & 22.1 & 0.47 & 0.01 	 

& 0.94 & 12.3 & 36.9 & 0.47 & 0.01 	 

	\\ & 100

& \textbf{1.0} & 13.0 & 25.7 & 0.47 & 0.01 	 

& \textbf{1.0} & 13.0 & 46.4 & 0.48 & 0.01 	 
 \\ \hline
\multirow{5}{*}{\rotatebox[origin=c]{90}{\textsc{satellite}}} 
	 & 10

& 0.51 & 10.1 & 14.7 & 0.40 & 0.01 	 

& \textbf{0.52} & 10.2 & 19.9 & 0.41 & 0.01 	 

	\\ & 30

& 0.55 & 10.3 & 17.1 & 0.40 & 0.01 	 

& \textbf{0.59} & 10.4 & 26.0 & 0.41 & 0.01 	 

	\\ & 50

& 0.71 & 10.7 & 20.3 & 0.41 & 0.01 	 

& \textbf{0.77} & 10.9 & 35.6 & 0.41 & 0.01 	 

	\\ & 70

& 0.87 & 11.1 & 22.9 & 0.41 & 0.01 	 

& \textbf{0.89} & 11.3 & 41.6 & 0.41 & 0.01 	 

	\\ & 100

& \textbf{0.97} & 12.3 & 26.1 & 0.41 & 0.01 	 

& \textbf{0.97} & 12.3 & 51.2 & 0.41 & 0.01 	 
 \\ \hline
\multirow{5}{*}{\rotatebox[origin=c]{90}{\textsc{sokoban}}} 
	 & 10

& 0.27 & 15.1 & 20.4 & 0.9 & 0.02 	 

& \textbf{0.30} & 15.2 & 41.6 & 0.91 & 0.03 	 

	\\ & 30

& 0.32 & 16.8 & 26.4 & 0.90 & 0.02 	 

& \textbf{0.40} & 17.9 & 81.0 & 0.93 & 0.06 	 

	\\ & 50

& 0.43 & 19.0 & 31.6 & 0.90 & 0.02 	 

& \textbf{0.50} & 20.4 & 120.3 & 0.95 & 0.08 	 

	\\ & 70

& \textbf{0.57} & 21.4 & 37.4 & 0.90 & 0.02 	 

& 0.54 & 22.5 & 154.6 & 0.98 & 0.10 	 

	\\ & 100

& 0.70 & 25.7 & 44.2 & 0.90 & 0.02 	 

& \textbf{0.73} & 25.7 & 207.1 & 1.0 & 0.12 	 
 \\ \hline
\multirow{5}{*}{\rotatebox[origin=c]{90}{\textsc{zeno}}} 
	 & 10

& 0.43 & 10.0 & 14.8 & 0.51 & 0.01 	 

& \textbf{0.50} & 10.1 & 20.5 & 0.51 & 0.01 	 

	\\ & 30

& 0.61 & 10.8 & 18.4 & 0.51 & 0.01 	 

& \textbf{0.70} & 11.1 & 31.3 & 0.51 & 0.01 	 

	\\ & 50

& 0.82 & 11.4 & 21.5 & 0.51 & 0.01 	 

& \textbf{0.86} & 11.8 & 41.2 & 0.52 & 0.02 	 

	\\ & 70

& \textbf{0.93} & 13.2 & 24.9 & 0.51 & 0.01 	 

& 0.92 & 13.5 & 51.9 & 0.52 & 0.02 	 

	\\ & 100

& 0.92 & 15.5 & 29.5 & 0.51 & 0.01 	 

& \textbf{0.94} & 15.5 & 65.3 & 0.52 & 0.02 	 
 \\ \hline
\multicolumn{2}{c|}{AVG} & 0.66 & 14.3 & 23.8 & 0.62 & 0.01& \textbf{0.69} & 14.6 & 54.7 & 0.63 & 0.02 %
\\ \bottomrule
\end{tabular}

     \caption{Results and comparison per domain for the \textit{Landmark} constraints on suboptimal and noisy observations.}
    \label{tab:lm2}
    \end{center}
\end{table*}

In the simplest variation of the data set from Table~\ref{tab:lmc-optimal}, the {\holmcsg{}} heuristic dominates the basic \holmc{} heuristic, at the cost of roughly double the number of constraints in the LP. 
Nevertheless, these additional constraints do not lead to a commensurate increasing in computational time in the recognition time. 
We observe a similar behavior for the {\holmcsg{}} in the suboptimal data set from Table~\ref{tab:lmc-sub-optimal}. 
Interestingly, the improvement in agreement for suboptimal plans seems to be more pronounced at lower levels of observability. 
This improvement stems primarily from the inclusion of extra landmarks constraints derived from the observations. 
We can clearly see this increase in the total number of rows of the LP, which increases as a function of the observability level (i.e., the more observations, the more constraints). 
When these observations are noise-free, they compensate for the lower probability with which partial observations might intersect with the landmarks of the planning problems corresponding to each goal hypothesis. 
Indeed, this points to our novel approach overcoming a known limitation of the landmark-based recognition approach from~\citet{pereira2020landmarks}. 
For the optimal, but noisy data set, not only does the {\holmcsg{}} heuristic dominate \holmc{}, it also consistently outperforms \holmc{} in agreement. 
While the computational cost of the resulting LP seems to increase more substantially, this additional cost is still negligible compared to the total cost of the recognition process. 

Finally, we provide an analysis of the results for noisy observations from suboptimal plans in Table~\ref{tab:lm2} segmented by the different problem domains paints a more complex picture of our results.
While the value of the {\holmcsg{}} heuristic still dominates \holmc{} across the board, its has a less pronounced impact on agreement ratio, depending on the domain. 
This is especially true for the blocks world domain. 
In summary for all benchmarks, the additional constraints either improve the agreement ratio overall, or tie with the base \holmc{} heuristic. 
These improvements come at a doubling of the number of constraints and a very small increase in the cost of solving the corresponding LP in most domains. 
Nevertheless, problems in the Sokoban domain show a pronounced increase in the size of the LP, this is likely due to the number of disjunctive landmarks induced by the grid setting from Sokoban.
\frm{When we break the results down into the different domains for the other benchmarks, do we see similar behavior. If not, revise the next sentence.}
We note that the behavior we describe for Table~\ref{tab:lm2} is similar across the domains in the data set we summarize Table~\ref{tab:lm1}.

\begin{figure}[h]
    \centering
    \rotatebox[origin=c]{90}{\holmcsg{}}
    \hspace{-1em}
\setlength\tabcolsep{1pt}
\begin{tabular}{ccccc}
10\% & 30\% & 50\% & 70\% & 100\%\\
\includegraphics[width=0.18\linewidth]{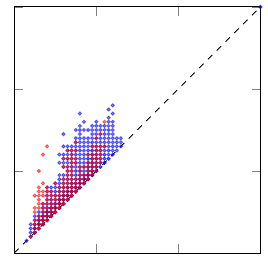}
& \includegraphics[width=0.18\linewidth]{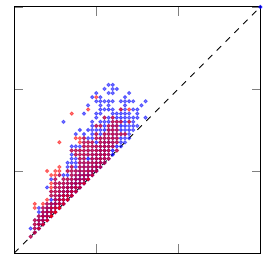}
& \includegraphics[width=0.18\linewidth]{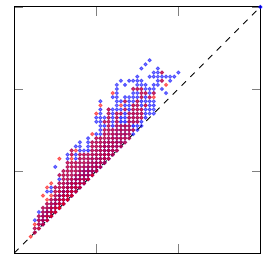}
& \includegraphics[width=0.18\linewidth]{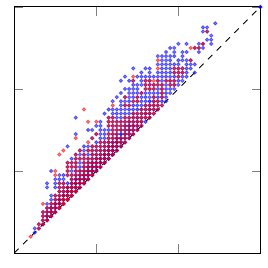}
& \includegraphics[width=0.18\linewidth]{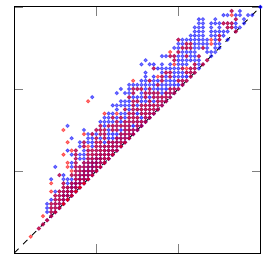}
\\6855/7545/0 & 9545/4855/0 & 10139/4261/0 & 9423/4977/0 & 4933/4667/0
\end{tabular}
    \\\holmc
    \caption{Scatter plot comparing the heuristic values of \holmc{} with \holmcsg{}, for observability levels 10\%, 30\%, 50\%, 70\% and 100\%. We take into consideration all goal hypotheses from all instances, and all four data sets. Red indicates the value for the reference goal, blue indicates the value for all other goal hypotheses.}
    \label{fig:lmc-h-value-scatter-color}
\end{figure}

We use scatter plots to better visualize the behavior of each heuristic. 
Figure~\ref{fig:lmc-h-value-scatter-color}, contain four scatter plots, one for each observability ratio (10\%, 30\%, 50\%, 70\%, 100\%) illustrating the values of the {\holmcsg{}/{\holmc} heuristics for both the reference goal (in red), and for all other goal hypotheses. 
The x-axis is the value of the basic heuristic without our modifications, and in the y-axis is the value of the modified heuristic. 
Each point in the plot is one goal candidate, for all candidates in all instances of all domains combined, and below each plot we show three numbers separated by a ``/'': number of points above the diagonal, on the diagonal, and below it. 
For all problems, the value of {\holmcsg{}} dominates {\holmc} as indicated by all points being above the diagonal. 
Importantly, most red points (correct hypotheses) concentrate toward the left of the point cloud and above the corresponding blue points (incorrect hypotheses).
This indicates that the value of {\holmcsg{}} dominates {\holmc}, which suggests that the value of {\holmcsg{}} increases more for incorrect hypotheses than for correct ones, and thus helps in differentiating goals. 
As we increase observability, the point cloud flattens towards the diagonal, indicating that the additional constraints from \holmcsg{} yield values closer to the previous {\holmc} heuristic, which is an expected result.

Finally, while the agreement ratio from the experiments in this section are lower than for the version of our framework in \cite{Santos2021} that combines the constraints for State Equation \textbf{and} the original Landmarks constraints from Definition~\ref{def:lm_constraints}, shown in the columns labeled ``S,L'' in \cite{Santos2021}. 
However, the comparable columns to our results in this paper are the ones labelled ``L'' alone, which correspond to the columns labeled `$\lmc$' in this article. 
Indeed, this paper deliberately chooses to focus on the landmarks constraints in order to explore the theoretical properties of linear programming constraints within our framework. 
As expected, adding all the sources of constraints from that paper strengthens our heuristic and thus yield a similar gain in recognition accuracy.

\subsection{Comparison with Previous Recognition Approaches}
\label{sec:previous_methods}

\begin{table}[tb]
    \begin{center}
    \begin{subtable}[t]{.45\textwidth}
        \begin{center}
        \fontsize{8.}{9.}\selectfont
        \setlength\tabcolsep{2pt}
\begin{tabular}{cc|cc|cc|cc}
\toprule
\multicolumn{2}{c}{} %
& \multicolumn{2}{c|}{\rg}
& \multicolumn{2}{c|}{\pom}
& \multicolumn{2}{c}{\holmcsg{}}\\
\cmidrule(lr){3-4} \cmidrule(lr){5-6} \cmidrule(lr){7-8}

\# & \textbf{\%}
& \textbf{Agr}  & \textbf{Time} 
& \textbf{Agr}  & \textbf{Time} 
& \textbf{Agr}  & \textbf{Time} 

\\ 
\hline %

\multicolumn{2}{c|}{10\%} 
& \textbf{0.71} & 0.18 	 

& 0.37 & 0.0 	 

& \textbf{0.71} & 0.62 	 
 \\
\multicolumn{2}{c|}{30\%} 
& 0.69 & 0.21 	 

& 0.61 & 0.0 	 

& \textbf{0.73} & 0.63 	 
 \\
\multicolumn{2}{c|}{50\%} 
& 0.76 & 0.25 	 

& 0.73 & 0.0 	 

& \textbf{0.78} & 0.63 	 
 \\
\multicolumn{2}{c|}{70\%} 
& 0.82 & 0.33 	 

& 0.85 & 0.0 	 

& \textbf{0.86} & 0.63 	 
 \\
\multicolumn{2}{c|}{100\%} 
& 0.89 & 0.48 	 

& \textbf{0.93} & 0.0 	 

& 0.91 & 0.63 	 
 \\\hline
\multicolumn{2}{c|}{AVG} & 0.77 & 0.29& 0.70 & 0.0& \textbf{0.80} & 0.63 %
\\ \bottomrule
\end{tabular}

         \caption{Non-noisy observations (optimal plans).}
        \label{tab:baselines-optimal-noisefree}
        \end{center}
    \end{subtable}
    \begin{subtable}[t]{.45\textwidth}
        \begin{center}
        \fontsize{8.}{9.}\selectfont
        \setlength\tabcolsep{2pt}
\begin{tabular}{cc|cc|cc|cc}
\toprule
\multicolumn{2}{c}{} %
& \multicolumn{2}{c|}{\rg}
& \multicolumn{2}{c|}{\pom}
& \multicolumn{2}{c}{\holmcsg{}}\\
\cmidrule(lr){3-4} \cmidrule(lr){5-6} \cmidrule(lr){7-8}

\# & \textbf{\%}
& \textbf{Agr}  & \textbf{Time} 
& \textbf{Agr}  & \textbf{Time} 
& \textbf{Agr}  & \textbf{Time} 

\\ 
\hline %

\multicolumn{2}{c|}{10\%} 
& 0.25 & 0.16 	 

& 0.26 & 0.0 	 

& \textbf{0.49} & 0.62 	 
 \\
\multicolumn{2}{c|}{30\%} 
& 0.31 & 0.17 	 

& 0.48 & 0.0 	 

& \textbf{0.55} & 0.63 	 
 \\
\multicolumn{2}{c|}{50\%} 
& 0.35 & 0.17 	 

& 0.63 & 0.0 	 

& \textbf{0.68} & 0.63 	 
 \\
\multicolumn{2}{c|}{70\%} 
& 0.43 & 0.2 	 

& 0.78 & 0.0 	 

& \textbf{0.81} & 0.63 	 
 \\
\multicolumn{2}{c|}{100\%} 
& 0.42 & 0.21 	 

& \textbf{0.89} & 0.0 	 

& 0.88 & 0.63 	 
 \\\hline
\multicolumn{2}{c|}{AVG} & 0.35 & 0.18& 0.61 & 0.0& \textbf{0.68} & 0.63 %
\\ \bottomrule
\end{tabular}

         \caption{Noisy observations (optimal plans).}
        \label{tab:baselines-optimal-noisy}
        \end{center}
    \end{subtable}
    \end{center}
    \caption{Results for baseline approaches (RG and POM) and \holmcsg{} for optimal data sets.}
    \label{tab:baselines-optimal}
\end{table}
\begin{table}[tb]
    \begin{center}
        \fontsize{8.}{9.}\selectfont
        \setlength\tabcolsep{2pt}
\begin{tabular}{cc|cc|cc|cc}
\toprule
\multicolumn{2}{c}{} %
& \multicolumn{2}{c|}{\rg}
& \multicolumn{2}{c|}{\pom}
& \multicolumn{2}{c}{\holmcsg{}}\\
\cmidrule(lr){3-4} \cmidrule(lr){5-6} \cmidrule(lr){7-8}

\# & \textbf{\%}
& \textbf{Agr}  & \textbf{Time} 
& \textbf{Agr}  & \textbf{Time} 
& \textbf{Agr}  & \textbf{Time} 

\\ 
\hline %

\multicolumn{2}{c|}{10\%} 
& 0.60 & 0.19 	 

& 0.42 & 0.0 	 

& \textbf{0.66} & 0.62 	 
 \\
\multicolumn{2}{c|}{30\%} 
& 0.67 & 0.22 	 

& 0.63 & 0.0 	 

& \textbf{0.73} & 0.63 	 
 \\
\multicolumn{2}{c|}{50\%} 
& 0.74 & 0.29 	 

& 0.74 & 0.0 	 

& \textbf{0.76} & 0.63 	 
 \\
\multicolumn{2}{c|}{70\%} 
& 0.81 & 0.42 	 

& \textbf{0.83} & 0.0 	 

& \textbf{0.83} & 0.63 	 
 \\
\multicolumn{2}{c|}{100\%} 
& 0.86 & 0.71 	 

& \textbf{0.90} & 0.0 	 

& 0.88 & 0.64 	 
 \\\hline
\multicolumn{2}{c|}{AVG} & 0.74 & 0.37& 0.70 & 0.0& \textbf{0.77} & 0.63 %
\\ \bottomrule
\end{tabular}

        \caption{Results for baseline approaches (RG and POM) and \holmcsg{} for suboptimal-plan data sets and non-noisy observations.}
        \label{tab:baselines-sub-optimal-noisefree}
    \end{center}
\end{table}

In order to assess the effectiveness of our \textit{Linear Programming} approaches, we have run two goal recognition approaches that serve as baselines for performance. 
Specifically, we use a \textit{Classical Planning} approach~\cite{ramirez2010probabilistic} and recent state-of-the-art approaches that rely on \textit{landmarks}~\cite{pereira2020landmarks}. 
The approach from~\citet{ramirez2010probabilistic} which computes the solution set using a translation of the recognition problem into a \textit{Classical Planning} and a modified search procedure to compute cost differences between an optimal plan and a complying plan. 
We refer to this approach as RG on the tables. 
At a high level, their approach is similar to ours in that they compare the costs of complying and non-complying plans. 
While they compute the actual cost of such plans, we instead approximate the costs using our operator counting framework without actually computing a complete plan. 
By contrast, the recent state of the art of \citet{pereira2020landmarks} is similar to our approach in its avoidance of searching for full plans, and instead, it relies on counting observed landmarks on the observations, weighing them by their information value. 
We refer to this approach as POM in the tables. 
Each of these baselines provides distinct advantages to the goal recognition process. 
While RG provides superior agreement ratios on noise-free as well as in low-observability settings, POM is orders of magnitude faster and more robust to noise. 
The superior agreement ratio from RG, especially in low observability settings is largely due to its computation of complete plans, with its resulting of information quality available for inference. 
By contrast, POM avoids the expensive search procedure by relying on observing delete-relaxed landmarks. 
However, in low-observability settings, if the recognizer does not observe landmarks, the accuracy of the goal inference decreases substantially. 

Tables~\ref{tab:baselines-optimal}--\ref{tab:baselines-sub-optimal-noisefree} show the results for these approaches in the same data sets used in Section~\ref{sec:evaluation}. 
For convenience, we replicate the column showing our improved approach from Tables~\ref{tab:lm1}--\ref{tab:lm2}, so the reader can compare them side by side. 
We can see that, on average, the new approaches dominate both baselines in terms of agreement ratio when averaged across domains. 
While it does not have the same dramatic runtime advantage as the POM approach, it achieves a 2x speedup over the RG approach. 
Nevertheless, its agreement ratio at low observability and noisy settings is substantial in most domains. 
Ultimately, the additional information we include in the LP over simply observing landmarks as POM does prove advantageous. 
Similarly, computing the additional information is computationally more efficient than the full search performed by RG, even accounting for the extra effort in the LP to find noise-coping operator counts. 

\begin{table}[!htb]
    \begin{center}
        \fontsize{8.}{9.}\selectfont
        \setlength\tabcolsep{2pt}
\begin{tabular}{cc|cc|cc|cc}
\toprule
\multicolumn{2}{c}{} %
& \multicolumn{2}{c|}{\rg}
& \multicolumn{2}{c|}{\pom}
& \multicolumn{2}{c}{\holmcsg{}}\\
\cmidrule(lr){3-4} \cmidrule(lr){5-6} \cmidrule(lr){7-8}

\# & \textbf{\%}
& \textbf{Agr}  & \textbf{Time} 
& \textbf{Agr}  & \textbf{Time} 
& \textbf{Agr}  & \textbf{Time} 

\\ 
\hline %

\multirow{5}{*}{\rotatebox[origin=c]{90}{\textsc{blocks}}} 
	 & 10

& \textbf{0.42} & 0.3 	 

& 0.05 & 0.0 	 

& 0.37 & 1.46 	 

	\\ & 30

& \textbf{0.49} & 0.33 	 

& 0.22 & 0.0 	 

& 0.35 & 1.47 	 

	\\ & 50

& \textbf{0.55} & 0.4 	 

& 0.28 & 0.0 	 

& 0.42 & 1.47 	 

	\\ & 70

& \textbf{0.63} & 0.53 	 

& 0.38 & 0.0 	 

& 0.43 & 1.48 	 

	\\ & 100

& \textbf{0.74} & 0.76 	 

& 0.51 & 0.0 	 

& 0.54 & 1.48 	 
 \\ \hline
\multirow{5}{*}{\rotatebox[origin=c]{90}{\textsc{depots}}} 
	 & 10

& 0.02 & 0.05 	 

& 0.17 & 0.0 	 

& \textbf{0.37} & 0.62 	 

	\\ & 30

& 0.07 & 0.06 	 

& 0.21 & 0.0 	 

& \textbf{0.40} & 0.62 	 

	\\ & 50

& 0.01 & 0.05 	 

& 0.51 & 0.0 	 

& \textbf{0.65} & 0.63 	 

	\\ & 70

& 0.00 & 0.05 	 

& 0.54 & 0.0 	 

& \textbf{0.77} & 0.63 	 

	\\ & 100

& 0.01 & 0.05 	 

& 0.83 & 0.0 	 

& \textbf{0.93} & 0.63 	 
 \\ \hline
\multirow{5}{*}{\rotatebox[origin=c]{90}{\textsc{driverlog}}} 
	 & 10

& 0.21 & 0.07 	 

& 0.23 & 0.0 	 

& \textbf{0.40} & 0.47 	 

	\\ & 30

& 0.28 & 0.08 	 

& 0.45 & 0.0 	 

& \textbf{0.55} & 0.47 	 

	\\ & 50

& 0.12 & 0.07 	 

& 0.54 & 0.0 	 

& \textbf{0.60} & 0.48 	 

	\\ & 70

& 0.22 & 0.08 	 

& 0.64 & 0.0 	 

& \textbf{0.69} & 0.48 	 

	\\ & 100

& 0.19 & 0.08 	 

& 0.58 & 0.0 	 

& \textbf{0.68} & 0.48 	 
 \\ \hline
\multirow{5}{*}{\rotatebox[origin=c]{90}{\textsc{dwr}}} 
	 & 10

& 0.23 & 0.12 	 

& 0.33 & 0.0 	 

& \textbf{0.41} & 0.55 	 

	\\ & 30

& 0.09 & 0.18 	 

& \textbf{0.56} & 0.0 	 

& 0.35 & 0.55 	 

	\\ & 50

& 0.11 & 0.23 	 

& \textbf{0.75} & 0.0 	 

& 0.45 & 0.56 	 

	\\ & 70

& 0.10 & 0.14 	 

& \textbf{0.69} & 0.0 	 

& 0.59 & 0.56 	 

	\\ & 100

& 0.03 & 0.17 	 

& \textbf{0.88} & 0.0 	 

& 0.78 & 0.57 	 
 \\ \hline
\multirow{5}{*}{\rotatebox[origin=c]{90}{\textsc{ipc-grid}}} 
	 & 10

& 0.12 & 0.07 	 

& 0.54 & 0.0 	 

& \textbf{0.77} & 0.58 	 

	\\ & 30

& 0.08 & 0.07 	 

& 0.72 & 0.0 	 

& \textbf{0.77} & 0.59 	 

	\\ & 50

& 0.04 & 0.05 	 

& 0.85 & 0.0 	 

& \textbf{0.88} & 0.59 	 

	\\ & 70

& 0.02 & 0.05 	 

& \textbf{0.90} & 0.0 	 

& \textbf{0.90} & 0.59 	 

	\\ & 100

& 0.04 & 0.05 	 

& 0.92 & 0.0 	 

& \textbf{0.94} & 0.59 	 
 \\ \hline
\multirow{5}{*}{\rotatebox[origin=c]{90}{\textsc{ferry}}} 
	 & 10

& 0.31 & 0.06 	 

& 0.26 & 0.0 	 

& \textbf{0.38} & 0.40 	 

	\\ & 30

& 0.47 & 0.08 	 

& 0.44 & 0.0 	 

& \textbf{0.51} & 0.40 	 

	\\ & 50

& 0.66 & 0.14 	 

& 0.69 & 0.0 	 

& \textbf{0.81} & 0.40 	 

	\\ & 70

& 0.70 & 0.26 	 

& 0.78 & 0.0 	 

& \textbf{0.87} & 0.40 	 

	\\ & 100

& 0.71 & 0.60 	 

& 0.82 & 0.0 	 

& \textbf{0.94} & 0.40 	 
 \\ \hline
\multirow{5}{*}{\rotatebox[origin=c]{90}{\textsc{logistics}}} 
	 & 10

& 0.28 & 0.16 	 

& 0.41 & 0.0 	 

& \textbf{0.62} & 0.69 	 

	\\ & 30

& 0.12 & 0.07 	 

& 0.81 & 0.0 	 

& \textbf{0.84} & 0.70 	 

	\\ & 50

& 0.03 & 0.06 	 

& 0.90& 0.0 	 

& \textbf{0.95} & 0.70 	 

	\\ & 70

& 0.00 & 0.06 	 

& \textbf{0.99} & 0.00 	 

& 0.96 & 0.71 	 

	\\ & 100

& 0.00 & 0.06 	 

& \textbf{1.0} & 0.0 	 

& \textbf{1.0} & 0.71 	 
 \\ \hline
\multirow{5}{*}{\rotatebox[origin=c]{90}{\textsc{miconic}}} 
	 & 10

& 0.47 & 0.09 	 

& 0.35 & 0.0 	 

& \textbf{0.54} & 0.43 	 

	\\ & 30

& 0.64 & 0.12 	 

& 0.69 & 0.0 	 

& \textbf{0.80} & 0.43 	 

	\\ & 50

& 0.87 & 0.16 	 

& \textbf{0.93} & 0.0 	 

& \textbf{0.93} & 0.43 	 

	\\ & 70

& \textbf{0.98} & 0.23 	 

& 0.94 & 0.0 	 

& 0.93 & 0.44 	 

	\\ & 100

& \textbf{1.0} & 0.4 	 

& \textbf{1.0} & 0.0 	 

& \textbf{1.0} & 0.44 	 
 \\ \hline
\multirow{5}{*}{\rotatebox[origin=c]{90}{\textsc{rovers}}} 
	 & 10

& 0.37 & 0.07 	 

& 0.44 & 0.0 	 

& \textbf{0.52} & 0.47 	 

	\\ & 30

& 0.40 & 0.08 	 

& 0.51 & 0.0 	 

& \textbf{0.66} & 0.47 	 

	\\ & 50

& 0.49 & 0.08 	 

& 0.72 & 0.0 	 

& \textbf{0.86} & 0.48 	 

	\\ & 70

& 0.26 & 0.07 	 

& 0.89 & 0.0 	 

& \textbf{0.94} & 0.47 	 

	\\ & 100

& 0.34 & 0.09 	 

& 0.90& 0.0 	 

& \textbf{1.0} & 0.48 	 
 \\ \hline
\multirow{5}{*}{\rotatebox[origin=c]{90}{\textsc{satellite}}} 
	 & 10

& 0.41 & 0.06 	 

& 0.29 & 0.0 	 

& \textbf{0.52} & 0.41 	 

	\\ & 30

& 0.54 & 0.07 	 

& 0.51 & 0.0 	 

& \textbf{0.59} & 0.41 	 

	\\ & 50

& 0.61 & 0.08 	 

& 0.66 & 0.0 	 

& \textbf{0.77} & 0.41 	 

	\\ & 70

& 0.63 & 0.09 	 

& 0.78 & 0.0 	 

& \textbf{0.89} & 0.41 	 

	\\ & 100

& 0.47 & 0.09 	 

& 0.92 & 0.0 	 

& \textbf{0.97} & 0.41 	 
 \\ \hline
\multirow{5}{*}{\rotatebox[origin=c]{90}{\textsc{sokoban}}} 
	 & 10

& 0.13 & 0.71 	 

& 0.25 & 0.01 	 

& \textbf{0.30} & 0.91 	 

	\\ & 30

& 0.12 & 0.56 	 

& 0.29 & 0.01 	 

& \textbf{0.40} & 0.93 	 

	\\ & 50

& 0.01 & 0.86 	 

& 0.46 & 0.01 	 

& \textbf{0.50} & 0.95 	 

	\\ & 70

& 0.06 & 1.38 	 

& \textbf{0.58} & 0.01 	 

& 0.54 & 0.98 	 

	\\ & 100

& 0.04 & 0.79 	 

& \textbf{0.77} & 0.01 	 

& 0.73 & 1.0 	 
 \\ \hline
\multirow{5}{*}{\rotatebox[origin=c]{90}{\textsc{zeno}}} 
	 & 10

& 0.45 & 0.13 	 

& 0.31 & 0.0 	 

& \textbf{0.50} & 0.51 	 

	\\ & 30

& 0.61 & 0.15 	 

& 0.57 & 0.0 	 

& \textbf{0.70} & 0.51 	 

	\\ & 50

& 0.75 & 0.18 	 

& 0.73 & 0.0 	 

& \textbf{0.86} & 0.52 	 

	\\ & 70

& 0.82 & 0.23 	 

& 0.89 & 0.0 	 

& \textbf{0.92} & 0.52 	 

	\\ & 100

& 0.85 & 0.32 	 

& 0.90& 0.0 	 

& \textbf{0.94} & 0.52 	 
 \\ \hline
\multicolumn{2}{c|}{AVG} & 0.34 & 0.21& 0.61 & 0.0& \textbf{0.69} & 0.63 %
\\ \bottomrule
\end{tabular}

         \caption{Comparison with baseline approaches for suboptimal plans and noisy observations.}
        \label{tab:baselines-sub-optimal-noisy}
    \end{center}
    \vspace{-9pt}
    \label{tab:baselines-sub-optimal}
\end{table}

\section{Related Work}

Seminal work on \textit{Goal} and \textit{Plan Recognition} rely on the \textit{plan-library} formalism~\cite{AAAI_KautzA86,AIJ_Bayesian_CharniakG93,NewModel_GoldmanGM99}, i.e., a hierarchical formalism (i.e., ``hierarchical recipe'') that defines a collection of plans to achieve a set of goals or tasks. 
Existing library-based approaches employ different types of hierarchical formalisms for recognizing goals and plans, such as context-free grammars~\cite{UAI_PynadathW00,AIJ_GeibG09}, HTN-like formalisms using entropy and query methods~\cite{IJCAI_AvrahamiZilberbrand2005,IJCAI_MirskySGK16,AIJ_MirskySGK18}, purely HTN planning~\cite{HollerHTNRec_2018}, etc. 

\citet{Ecai_Hong00,JAIR_Hong01} veers away from the conventional reliance on \textit{plan-libraries} for recognizing goals and plans. 
They instead formalize recognition problems using a \textit{planning domain theory}, a notably more flexible formalism to formalize the knowledge of how to achieve goals. 
Similarly, \citet{ramirez2009plan,ramirez2010probabilistic} adopt \textit{planning domain theory} for recognizing goals and plans, and developed a robust probabilistic framework that enables the use of off-the-shelf planning techniques to perform the recognition process. 
The work of \citet{ramirez2009plan,ramirez2010probabilistic} was instrumental in shaping the landscape of \textit{Plan Recognition and Planning}, laying the groundwork for subsequent research in this field.
Inspired by \textit{Plan Recognition and Planning}, AUTOGRAPH (AUTOmatic Goal Recognition with A Planning Heuristic) from \citet{STAIRS_PattisonL10} is a probabilistic heuristic-based recognition approach able to recognize goals without exhaustively enumerating goal hypotheses. 

\citet{martin2015fast} developed a planning-based goal recognition approach that propagates cost and interaction information in a planning graph, and uses this information to compute posterior probabilities for a set of possible goals. 
The approach of \citet{martin2015fast} stands out as the pioneering in the literature that obviates calling a planner to perform the recognition task, resulting in a very fast recognition approach.
\citet{sohrabi2016plan} extended the probabilistic framework of \citet{ramirez2010probabilistic}, and developed a novel probabilistic recognition approach that deals explicitly with unreliable and spurious observations (i.e., noisy or missing observations), and is able to recognize both goals and plans. 
Their approach computes multiple high-quality plans (using a top-K planner that generates multiple K plans) to compute posterior probabilities over the goals. 
\citet{vered2016_OnlineMirroring} introduced the concept of \textit{Mirroring} in the context of online goal recognition. 
\citet{pereira2017landmark,pereira2020landmarks} developed recognition approaches that rely on the concept of \textit{landmarks}, and like the approach of \citet{martin2015fast}, such landmark-based approaches abstain from using a full-fledged planning process for the recognition process, yielding in very fast and yet accurate recognition approaches. 
Our approaches draw inspiration from the landmark-based approaches of \citet{pereira2017landmark,pereira2020landmarks}, but notably, the main difference is that our LP-based approaches rely on much ``stronger'' constraints, i.e., \textit{operator-counting} constraints, induced by the given observations and goal hypothesis.
Advances in \textit{Plan Recognition as Planning} expanded to other planning settings and environments with different assumptions. 
These include \textit{path-planning} recognition \cite{AAMAS_MastersS17,masters2019JAIR}, the recognition of plans and goals in \textit{continuous domain models}~\cite{vered2016_OnlineMirroring,Vered_2017gj,VeredPMKM_AAMAS18,KaminkaV_AAAI18}, goal recognition in \textit{incomplete and possibly incorrect domain models}~\cite{AAAI2018_PereiraMeneguzzi,PereiraPM_ICAPS_19}, recognition of goals for \textit{boundedly rational} and \textit{irrational agents}
\cite{masters2019_AAMAS_Rational_Irrational,NIPS_ZhiXuanMSTM20}, recognition of \textit{temporally extended goals}~\cite{Pereira_APIN_2023}, and recognition of goals with \textit{timing information}~\cite{ZhangKL_GR_Timing_23}.

Recent cutting-edge developments in \textit{Machine Learning} have extended their influence into \textit{Goal} and \textit{Plan Recognition}, leading to the emergence of \textit{model-free} recognition approaches.
\citet{IJCNN_AmadoPAMGM18} extends the \textit{latent-space planning} architecture of \citet{asai2018latent} for goals recognition in image-based domain models using off-the-shelf recognition approaches.
\citet{PereiraVMR19} developed recognition approaches in continuous control models with \textit{approximate transition functions}, addressing the influence of inaccuracies (imperfections) in learned control models on the effectiveness of recognizing goals.
\citet{AAMAS_PolyvyanyySLS20} employs \textit{process mining} techniques for ``model-approximate'' recognition, involving the extraction of crucial information from event logs and traces to discover models for recognizing goals. 
This work inspired recent research on \textit{process mining} for goal recognition, such as \citet{AIJ_MiningSuPLSB23}, and the usage of \textit{probabilistic trace alignment} for goal recognition~\cite{ICPM_KoMMPP23}.
The approach from \citet{AAAI_ShvoLIM21} learns interpretable sequence classifiers using \textit{finite state automata} in which they show that such an approach can be used for both recognizing goals and behavior classification.
GRAQL (Goal Recognition as Q-Learning) from \citet{AAAI_RL_AmadoMM22} uses learned Q-values instead of explicit goals from traditional recognition approaches. 
The recognition process involves minimizing the distance between observation sequences and the Q-values of the goal hypotheses' policies.
\citet{AAAI_AmadoPM23} develop an approach that combines \textit{learning statistical prediction} and \textit{symbolic reasoning} for performing the tasks of goal and plan recognition simultaneously.
Finally, \citet{ICAPS_GRNet_ChiariGPPSO23} frame the task of goal recognition as a classification task using the \textit{Long Short-Term Memory} (LSTM) recurrent \emph{Deep Neural Network}. %

\clearpage %

\section{Conclusions}

In this article, we develop a comprehensive framework for goal recognition based on \textit{Linear Programming} constraints for goal recognition. 
Specifically, we build upon the \textit{Operator-Counting} framework from \citet{pommerening2014lp}, which our previous work adapted for goal recognition problems~\cite{Santos2021}. 
Our key contributions are threefold. 
First, we provided a comprehensive theoretical characterization of \textit{Operator-Counting} heuristics for goal recognition, proving key properties which our previous work only intuited. 
Second, we expanded the basic \textit{Operator-Counting} constraints with new sets of constraints for \textit{Landmarks in Goal Recognition}~\cite{pereira2020landmarks} that allow our goal recognition approaches to improve their accuracy. 
Third, we comprehensively study the properties of various sources of constraints in an expanded benchmark. 
This expanded benchmark and additional analyses help us understand the contribution of various types of additional constraints to the goal recognition process. 

This article substantially deepens our understanding of landmark-based constraints, however, there substantial scope for further work. 
Specifically, while our seminal work on operator counting constraints for goal recognition uses various sources of \textit{Operator-Counting} constraints, including \textit{state-equation} constraints~\cite{bonet2013admissible}, \textit{post-hoc} optimization~\cite{florian2013posthoc} and landmarks~\cite{bonet2014flow}, this article focuses on landmark constraints. 
This is motivated by our earlier findings that landmark constraints contribute more towards goal recognition tasks than other sources of constraints~\cite{Santos2021}. 
Thus, our future work consists of a deeper investigation on various refinements of such constraint types for goal recognition heuristics.

\acks{
Felipe Meneguzzi acknowledges support from CNPq with projects 407058/2018-4 (Universal) and 302773/2019-3 (PQ Fellowship). 
Andr\'e G. Pereira acknowledges support from FAPERGS with project 17/2551-0000867-7. 
This study was financed in part by the Coordena\c c\~ao de Aperfei\c coamento de Pessoal de N\'ivel Superior --- Brasil (CAPES) --- Finance Code 001. 
}

\vskip 0.2in

\bibliographystyle{theapa}

\appendix
\clearpage
\section{Formalism Summary}

\begin{itemize}
    \item[$\variables$] discrete finite-domain variables;
    \item[$\operators$] {\sasplus} \emph{operators};
    \item[$\initialstate$] initial state;
    \item[$\goalstate$] goal state (this is actually a conjunctive formula, not a state, for the states that qualify as a goal see $\goalstates$ below);
    \item[$\goalstates$] subset of states consistent with $\goalstate$;
    \item[$\cost$] cost function;
    \item[$\planningtask$] planning task $\planningtask = \tuple{\variables, \operators, \initialstate, \goalstate, \cost}$;
    \item[$\tuple{\variable,v}$] atom consisting of a variable $\variable \in \variables$ and one of its values $v \in \dom{\variable}$;
    \item[$\facts$] The set if all atoms in a domain;
    \item[$s$] A state;
    \item[$\states$] the set of all (complete) states over \variables;
    \item[$o = \tuple{p, e}$] operator: $p = \pre(o)$ is the set of preconditions, and $e = \post(o)$ is the set of effects;
    \item[$s'= s\exec{o}$] state resulting from executing an operator $o$;
    \item[$\plan$] an s-plan that starts at state $s$ and results in a state $\goalstate$;
    \item[$\transgraph_{\planningtask}$] \emph{transition system} $\transgraph_{\planningtask} = \tuple{\vertices, \edges, \initialstate, \goalstates}$ induced by $\planningtask$;
    \item[$\edges$] set of transitions; 
    \item[$\h$] A heuristic function $\h : S \rightarrow \Real\cup\{\infty\}$ from states $S$ into a real (and possibly infinite) value; 
    \item[$\hoptimal(s)$] The perfect/optimal heuristic computed from state $s$;
    \item[$\textup{IP/LP}$] Integer/Linear Program, these programs lead to a number of operator-counting heuristics, depending on the constraints:
    \begin{itemize}
        \item[$\hseq$] An operator-counting heuristic whose LP includes state equation constraints;
        \item[$\hlmc$] An operator-counting heuristic whose LP includes landmark constraints; 
        \item[$\hpho$] An operator-counting heuristic whose LP includes post-hoc optimization constraints; 
        \item[$\hflow$] An operator-counting heuristic whose LP includes network flow constraints;
    \end{itemize} 
    \item[$\variables$] A set of real-valued and integer variables, of which:
    \begin{itemize}
        \item[$\Y{o}$] A non-negative \emph{operator-counting} variable, there is one such variable for each $o \in \operators$;
        \item[$\Yobs{o}$] A non-negative \emph{observation-counting} variable, there is one such variable for each $o \in \operators$;
        \item[$\varU{o}$] An operator counting variable indicating whether $o \in \operators$ is part of a delete-relaxed plan $\plan$\frm{Make sure that the plans referred to here are always delete relaxed};
        \item[$\varR{a}$] A variable indicating that fact $a \in \facts$ is reached by a delete-relaxed plan $\plan$, note that this variable does not count operators per se;
        \item[$\varA{o,a}$] A variable indicating whether $o \in \operators$ is the first operator in a delete-relaxed $\plan$ to achieve fact $a \in \facts$;
        \item[$\varT{o}$] an integer variable denoting the time in which $o \in \operators$ occurs in a delete-relaxed plan $\plan$ for the  first time;
        \item[$\varT{a}$] an integer variable denoting the time in which $a \in \facts$ is achieved for the first time;
    \end{itemize} 
    \item[$\occur_{\plan}(o)$] The fact that an operator $o \in \operators$ occurs in a plan $\plan$;
    \item[$\constraints$] A set of operator-counting constraints;
    \item[$\landmark$] A disjunctive action landmark;
    \item[$\grplanningtask$] A planning task without a goal condition (used to represent the domain for a goal recognition task)
    \item[$\grtask$] A goal recognition task $\tuple{\grplanningtask, \goalconditions, \observations}$ with a domain $\grplanningtask$, goal hypotheses $\goalconditions$, and observations $\observations$
    \item[$\goalconditions$] A set of goal conditions $\goalstate$ 
    \item[$\observations$] A sequence of observations
    \item[$\rgoal$] The \emph{reference} goal for a goal recognition task $\grtask$
    \item[$\unreliability$] The level of unreliability or noise assumed in the observations;
\end{itemize} %

\end{document}